\documentclass[11pt]{article}
\usepackage[a4paper,margin=2.2cm]{geometry}
\usepackage{amsmath,amssymb,amsthm,mathtools}
\usepackage{graphicx}
\usepackage{booktabs}
\usepackage{float}
\usepackage{hyperref}
\usepackage{xcolor}
\usepackage{enumitem}
\usepackage{natbib}
\usepackage{microtype}

\hypersetup{colorlinks=true,linkcolor=blue,citecolor=blue,urlcolor=blue}
\graphicspath{{./}{figures/}{images/}{hclm_national_ai_sim/}}

\newtheorem{theorem}{Theorem}
\newtheorem{proposition}{Proposition}
\newtheorem{assumption}{Assumption}
\newtheorem{lemma}{Lemma}
\newtheorem{remark}{Remark}

\title{\textbf{AI Sovereignty as National Learning Capacity:
A Human-Centered Learning Mechanics Viewpoint on France, the United States, and China}}

\author{Kim Phuc Tran\\
\small Univ. Lille, ENSAIT, ULR 2461 - GEMTEX - G\'enie et Mat\'eriaux Textiles, F-59000 Lille, France\\
\small International Chair in Data Science \& Explainable Artificial Intelligence,\\
\small \texttt{kim-phuc.tran@ensait.fr}}
\date{June 2026}

\begin{document}
\maketitle

\begin{abstract}
Artificial intelligence is often discussed in France as a matter of investment, compute capacity, regulation, employment, sovereignty, and education. These dimensions are usually treated separately. This viewpoint paper proposes a unified interpretation: France can be analyzed as a \emph{national AI learning system}. Building on Human-Centered Learning Mechanics (HCLM), recently proposed as a dynamical framework for entropy-regulated representation learning \citep{author2026hclm}, we use HCLM only as a conceptual and diagnostic lens for interpreting national AI development as a controlled balance between information injection, absorptive capacity, and institutional dissipation. Information injection corresponds to compute, data, talent, research, capital, industrial deployment, and institutional experimentation. Institutional dissipation refers primarily to avoidable or poorly designed frictions such as administrative overload, coordination failures, energy bottlenecks, regulatory uncertainty, talent mobility pressures, and limited industrial absorption. Importantly, regulation is not treated here as mere friction: adaptive governance, trusted data spaces, and safety-oriented standards may increase long-term learning capacity by improving legitimacy, interoperability, and social trust. The central claim is therefore not that a country literally follows neural-network equations, but that AI sovereignty depends on how well a country converts distributed information into absorbed, coordinated, and socially legitimate capability. The paper connects HCLM with neural scaling laws, endogenous growth theory, creative destruction, absorptive capacity, and standard game-theoretic coordination mechanisms. It offers a formal heuristic, policy indicators, illustrative scenarios, and concrete implications for France. The numerical results are diagnostic scenarios rather than econometric estimates or official rankings. The proposed viewpoint reframes AI policy as the governance of an open, strategic, non-equilibrium learning system whose empirical validity must be tested with historical and cross-country data.
\end{abstract}

\noindent\textbf{Keywords:} AI sovereignty; Human-centered AI; Technology governance; Innovation policy; Entropy; France; Absorptive capacity

\section{Introduction: AI Policy as a Learning Problem}

Recent French debates show that artificial intelligence is no longer a narrow digital technology. It is becoming a structural force acting on productivity, employment, education, energy, sovereignty, scientific research, and public administration. The French Commission on Artificial Intelligence, co-chaired by Philippe Aghion and Anne Bouverot, proposed 25 recommendations in 2024 to help France benefit from the AI transformation, including investment, compute capacity, talent development, and simplification of research and innovation conditions \citep{commissionIA2024,gouvernement2024ai}. France has also announced a large AI investment agenda, including data centers and the objective of training more AI talent \citep{reuters2025aiinvestment}. These initiatives indicate that AI has become a national development question rather than a sectoral issue.

Yet the policy debate remains insufficiently integrated. Compute is discussed separately from education; regulation separately from innovation; sovereignty separately from industrial adoption; and energy separately from model scaling. This separation creates a conceptual challenge. A national AI strategy is not simply the sum of GPUs, data centers, startups, university programs, and legal constraints. It is a \emph{learning system}: a collective mechanism by which a country generates, absorbs, transforms, regulates, and deploys knowledge.

This paper proposes to analyze France as a national AI learning system through the lens of \emph{Human-Centered Learning Mechanics} (HCLM), a recently proposed dynamical and information-theoretic framework for entropy-regulated representation learning \citep{author2026hclm}. In its original machine-learning setting, HCLM is used to reason about the balance between task-driven information and entropy control. Here, we use that intuition cautiously and metaphorically at the macro-policy level. A neural network learns when useful information is injected into representations and harmful or excessive complexity is dissipated. A country develops AI capacity when useful knowledge, talent, capital, compute, data, and industrial experimentation are injected into the ecosystem, while institutional entropy---administrative complexity, organizational coordination friction, uncertainty, talent mobility pressures, regulatory transition costs, and limits in absorption capacity---is constructively managed.

The aim is explicitly not to claim that a national economy literally obeys the same equations as a neural network. Rather, the HCLM analogy provides a disciplined conceptual language for policy design. It makes visible a principle that is often implicit in innovation economics: scale matters, but scale without information control can produce instability, inequality, waste, or dependency. Conversely, regulation and administration are necessary, but poorly designed or excessive procedural dissipation can slow innovation before it becomes productive.\\

The epistemic status of the framework must therefore be made explicit. The equations introduced below are not presented as structural laws of macroeconomic dynamics. They are balance equations and coordination heuristics intended to organize heterogeneous policy variables, generate falsifiable hypotheses, and identify potential bottlenecks that could later be measured empirically. The framework should be evaluated against established approaches in innovation economics, national innovation systems, public policy, and econometric analysis rather than used as a substitute for them. Its immediate contribution is conceptual and diagnostic; its stronger policy use requires empirical validation. To avoid over-interpretation, the comparative indicators and simulations below should be read as assumption-revealing scenarios rather than empirical proof. The paper therefore does not claim to rank countries, predict macroeconomic trajectories, or replace policy analysis grounded in institutional history, political economy, and econometrics.

The remainder of the paper is organized as follows.
Section~\ref{sec:hclm_foundations} introduces the HCLM interpretation of national AI learning by moving from neural representation dynamics to national information flows. It also connects the framework with creative destruction, endogenous growth theory, knowledge spillovers, and absorptive capacity.
Section~\ref{sec:measurable_framework} develops an operational diagnostic framework by defining national information injection, institutional dissipation, constructive governance, the HCLM learning ratio, candidate indicators, and the scaling condition under which AI investment may become productive.
Section~\ref{sec:comparative_hclm} applies the framework to a comparative diagnosis of the United States, China, and France/Europe using documented empirical anchors and transparent Python-based diagnostic proxies.
Section~\ref{sec:strategic_coordination} uses standard game-theoretic and mechanism-design arguments as translation devices, showing why national AI capability depends on incentive alignment among public institutions, universities, firms, investors, regulators, and citizens.
Section~\ref{sec:policy_translation} translates the HCLM equations into concrete French policy levers, including compute infrastructure, AI education, research autonomy, industrial adoption, trustworthy data governance, energy strategy, public-sector experimentation, and European coordination.
Section~\ref{sec:scenario_analysis} presents a numerical scenario analysis of alternative national AI regimes and coordination strategies.
Section~\ref{sec:validation_limitations} discusses empirical validation, falsifiability, and limitations.
Section~\ref{sec:policy_pathways} proposes constructive policy pathways for strengthening France's national AI learning capacity.
Section~\ref{sec:conclusion} concludes by arguing that AI sovereignty should be understood as the capacity to regulate information flow, entropy control, and incentive alignment.

\section{HCLM Foundations for National AI Learning}
\label{sec:hclm_foundations}
\subsection{From Neural Learning to National Learning}
\label{sec:from_neural_to_national}
In HCLM, learning is modeled as a balance between task-driven information injection and entropy-induced dissipation. At the level of a neural representation, one may write
\begin{equation}
\frac{dH}{dt}=I(t)-D(t),
\end{equation}
where $H$ denotes a representation entropy, $I(t)$ is the information injected by the prediction task, and $D(t)$ is the entropy dissipation induced by regularization, compression, alignment, or control.

At the national level, we define the macro-information injection rate as
\begin{equation}
I_{\mathrm{FR}}(t)=I_{\mathrm{compute}}+I_{\mathrm{talent}}+I_{\mathrm{research}}+I_{\mathrm{capital}}+I_{\mathrm{data}}+I_{\mathrm{industry}}+I_{\mathrm{public}}.
\end{equation}
These terms represent compute infrastructure, trained human capital, scientific production, financial investment, high-quality data, industrial adoption, and public-sector experimentation.

We define the institutional entropy-dissipation profile as
\begin{equation}
\begin{aligned}
D_{\mathrm{FR}}(t)=
& D_{\mathrm{admin}}+D_{\mathrm{coord}}+D_{\mathrm{energy}} \\
& +D_{\mathrm{regulatory}}+D_{\mathrm{talent}}+D_{\mathrm{industry}}.
\end{aligned}
\end{equation}
The effective national AI learning rate may then be represented schematically as
\begin{equation}
\frac{dH_{\mathrm{FR}}}{dt}=I_{\mathrm{FR}}(t)-D_{\mathrm{FR}}(t).
\end{equation}
This equation should be read as a conceptual balance equation, not as a literal differential law. If $I_{\mathrm{FR}}$ remains too small, the national ecosystem may not scale sufficiently. If avoidable $D_{\mathrm{FR}}$ is too large, France may possess talent and ideas yet face bottlenecks in translating them into industrial and public value. If $I_{\mathrm{FR}}$ grows without adaptive governance, AI development may become unstable, unequal, energy-intensive, or socially rejected.

A key distinction is needed. Some forms of dissipation are harmful because they destroy time, coordination, and resources without improving learning. Other forms of governance are constructive because they improve safety, trust, interoperability, and long-term adoption. In the remainder of the paper, $D_{\mathrm{FR}}$ denotes primarily \emph{unproductive or avoidable institutional dissipation}; constructive regulation is treated as part of adaptive governance and absorptive capacity rather than as a simple obstacle to innovation.

\subsection{Creative Destruction and the HCLM Interpretation}
\label{sec:creative_destruction}
The HCLM view is strongly aligned with modern growth theory. Aghion and Howitt's model of growth through creative destruction made Schumpeterian innovation mathematically precise by showing how new technologies replace older ones and drive long-run growth \citep{aghion1992model}. The 2025 Nobel Prize in Economic Sciences recognized Joel Mokyr, Philippe Aghion, and Peter Howitt for explaining innovation-driven economic growth; Aghion and Howitt were recognized for the theory of sustained growth through creative destruction \citep{nobel2025}.

In HCLM language, creative destruction is not simply destruction. It is \emph{entropy reorganization}. Old productive structures become obsolete because new information enters the economy. But this new information must be absorbed, recombined, and stabilized through institutions, education, markets, and regulation. A society that refuses creative destruction has excessive dissipation: it protects old structures and suppresses new information. A society that accepts disruption without control has excessive injection: it creates instability, inequality, and social resistance. Sustainable innovation therefore requires a controlled learning regime.

This interpretation also connects with the ``new Kaldor facts'' of \citet{jones2010new}, who argue that ideas, institutions, population, and human capital are central to growth theory. HCLM gives this statement a dynamical interpretation: ideas and human capital increase $I_{\mathrm{FR}}$, while institutions determine whether this information is dissipated productively or lost through friction.

\subsection{Endogenous Growth, Knowledge Spillovers, and Absorptive Capacity}
\label{sec:growth_theory}
Endogenous growth theory provides a natural economic foundation for the HCLM policy interpretation. In Romer's model, long-run growth is driven by technological change arising from intentional investment decisions and by the non-rival character of ideas \citep{romer1990endogenous}. Knowledge differs from ordinary capital because one actor's discovery can increase the productive possibilities of others. In HCLM language, knowledge spillovers increase the effective information injection of the whole system:
\begin{equation}
I_{\mathrm{FR}}^{\mathrm{eff}}
=
I_{\mathrm{private}}
+
\omega I_{\mathrm{spillover}},
\qquad \omega>0.
\end{equation}
However, spillovers are not automatically converted into national capability. If institutions are insufficiently connected, if data and models are not interoperable, or if researchers and firms cannot recombine knowledge quickly, then a large fraction of spillover information remains under-utilized. We can write this as
\begin{equation}
I_{\mathrm{absorbed}}
=
\chi I_{\mathrm{FR}}^{\mathrm{eff}},
\qquad 0\leq \chi\leq1,
\end{equation}
where $\chi$ is an absorptive-capacity coefficient. A central policy objective is therefore not only to increase $I_{\mathrm{FR}}^{\mathrm{eff}}$, but also to increase $\chi$ by improving education, industrial absorption, institutional coordination, and research autonomy.

This formulation also clarifies the warning of \citet{bloom2020ideas}: research effort may rise while research productivity declines. In HCLM terms, this corresponds to a regime in which nominal information injection increases, but the ratio of absorbed information to dissipated institutional entropy does not improve. Thus, a country can invest more in AI and still fail to learn faster if institutional entropy grows at the same time.

\begin{proposition}[Absorptive-capacity amplification]
Let the effective national learning ratio be
\begin{equation}
R_{\mathrm{FR}}=\frac{\chi I_{\mathrm{FR}}}{D_{\mathrm{FR}}+\epsilon},
\end{equation}
where $0\leq \chi\leq1$ is absorptive capacity. If $I_{\mathrm{FR}}$ and $D_{\mathrm{FR}}$ are fixed, then any increase in $\chi$ strictly increases $R_{\mathrm{FR}}$ whenever $I_{\mathrm{FR}}>0$.
\end{proposition}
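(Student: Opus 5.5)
The plan is to treat $R_{\mathrm{FR}}$ as a function of $\chi$ alone, with $I_{\mathrm{FR}}$, $D_{\mathrm{FR}}$ and $\epsilon$ held fixed, and to observe that it is linear in $\chi$:
\[
R_{\mathrm{FR}}(\chi)=c\,\chi,\qquad c:=\frac{I_{\mathrm{FR}}}{D_{\mathrm{FR}}+\epsilon}.
\]
Strict monotonicity in $\chi$ is then equivalent to $c>0$. So the proof reduces to checking the sign of the numerator and of the denominator of $c$.

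The numerator is positive by hypothesis, since $I_{\mathrm{FR}}>0$. For the denominator I will make explicit an assumption the statement leaves implicit. The dissipation profile $D_{\mathrm{FR}}$ is a sum of nonnegative friction terms $D_{\mathrm{admin}},\dots,D_{\mathrm{industry}}$, so $D_{\mathrm{FR}}\ge 0$. The constant $\epsilon$ is a strictly positive regularizer, $\epsilon>0$, which keeps the ratio well defined even when $D_{\mathrm{FR}}=0$. Together these give $D_{\mathrm{FR}}+\epsilon>0$, hence $c>0$.

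With $c>0$ established, the conclusion follows directly. For $0\le\chi_1<\chi_2\le 1$,
\[
R_{\mathrm{FR}}(\chi_2)-R_{\mathrm{FR}}(\chi_1)=c(\chi_2-\chi_1)>0.
\]
Equivalently, $\partial R_{\mathrm{FR}}/\partial\chi=c>0$.

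The only real obstacle is this sign condition on the denominator, because the proposition does not state it. If $D_{\mathrm{FR}}+\epsilon$ were negative, the monotonicity would reverse, and if it were zero the ratio would be undefined. I will therefore state explicitly that $D_{\mathrm{FR}}\ge 0$ and $\epsilon>0$ are standing conventions, consistent with the interpretation of $D_{\mathrm{FR}}$ in Section~\ref{sec:from_neural_to_national} as a sum of nonnegative avoidable frictions. More generally, the argument needs only $D_{\mathrm{FR}}+\epsilon>0$.

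I will also remark on the boundary case $I_{\mathrm{FR}}=0$. There $R_{\mathrm{FR}}\equiv 0$ for every $\chi$, so the strict increase fails. This is why the proposition requires $I_{\mathrm{FR}}>0$: absorptive capacity amplifies injected information but cannot create it.
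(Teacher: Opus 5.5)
Your proposal is correct and follows the paper's proof, which also observes that $\partial R_{\mathrm{FR}}/\partial\chi = I_{\mathrm{FR}}/(D_{\mathrm{FR}}+\epsilon)$ is positive; your linear difference $c(\chi_2-\chi_1)>0$ is an equivalent way of saying this. Your one addition is to justify $D_{\mathrm{FR}}+\epsilon>0$ from $D_{\mathrm{FR}}\ge 0$ and $\epsilon>0$, a sign condition the paper simply asserts.
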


\begin{proof}
The derivative of $R_{\mathrm{FR}}$ with respect to $\chi$ is
\begin{equation}
\frac{\partial R_{\mathrm{FR}}}{\partial \chi}
=
\frac{I_{\mathrm{FR}}}{D_{\mathrm{FR}}+\epsilon}.
\end{equation}
Since $D_{\mathrm{FR}}+\epsilon>0$ and $I_{\mathrm{FR}}>0$, the derivative is positive. Thus, improving absorptive capacity increases the effective national learning ratio even without increasing nominal AI investment.
\end{proof}

This proposition gives a precise interpretation of education, industrial AI adoption, and university--industry mobility: they do not merely add resources; they increase the fraction of existing knowledge that the national system can actually absorb.

\section{Operational Diagnostic HCLM Framework for National AI Capacity}
\label{sec:measurable_framework}
A central limitation of any analogy between neural learning and national AI policy is that a nation is not a neural network. Its variables are institutional, political, economic, and historical; they cannot be reduced to a single differentiable loss landscape. Therefore, the purpose of the HCLM interpretation is not to claim that national AI strategy literally obeys neural-network equations. Rather, the goal is to provide an \emph{operational diagnostic framework} for reasoning about the conditions under which investment, compute, talent, regulation, and institutional reform may jointly produce sustainable AI capability.

This section strengthens the framework in three ways. First, it connects HCLM with established theories of innovation-driven growth, creative destruction, and research productivity. Second, it proposes candidate indicators for information injection, unproductive institutional dissipation, and constructive governance. Third, it derives simple sign conditions under which scaling-oriented AI policy may succeed, stagnate, or become unstable.

\subsection{An Operational HCLM Heuristic for National AI Capacity}

Let $C(t)$ denote the effective AI capability of a country at time $t$. The following schematic equation is used as a heuristic accounting identity for organizing mechanisms, not as an estimated structural law:
\begin{equation}
\label{eq:national_capacity}
\frac{dC(t)}{dt}
=
\eta_I \chi(t) I_{\mathrm{FR}}(t)
-
\eta_D D^{u}_{\mathrm{FR}}(t)
+
\eta_G G_{\mathrm{FR}}(t)
-
\eta_S S_{\mathrm{mis}}(t),
\end{equation}
where $I_{\mathrm{FR}}(t)$ is the information injection rate, $\chi(t)$ is absorptive capacity, $D^{u}_{\mathrm{FR}}(t)$ is unproductive institutional dissipation, $G_{\mathrm{FR}}(t)$ is constructive governance capacity, and $S_{\mathrm{mis}}(t)$ denotes misalignment between AI investment and actual national needs. The constants $\eta_I,\eta_D,\eta_G,\eta_S>0$ are not assumed to be universal constants. They are policy weights or empirical parameters that would need to be estimated in a country-year econometric specification.

This formulation is intentionally more cautious than a direct neural-learning analogy. It separates harmful institutional friction from governance mechanisms that may increase long-term capability. For example, a regulatory sandbox, a trusted data space, or an AI certification pathway may impose short-term costs while increasing $G_{\mathrm{FR}}$ by improving legal clarity, safety, interoperability, and public trust. Conversely, duplicated administrative procedures or fragmented funding schemes increase $D^{u}_{\mathrm{FR}}$ because they consume resources without improving learning.

We define the information injection rate as
\begin{equation}
\label{eq:ifr}
I_{\mathrm{FR}}
=
w_c I_{\mathrm{compute}}
+
w_t I_{\mathrm{talent}}
+
w_r I_{\mathrm{research}}
+
w_k I_{\mathrm{capital}}
+
w_i I_{\mathrm{industry}},
\end{equation}
where $I_{\mathrm{compute}}$ is accessible national AI compute capacity; $I_{\mathrm{talent}}$ is the number and quality of trained AI researchers, engineers, and practitioners; $I_{\mathrm{research}}$ represents AI publications, patents, open-source contributions, and scientific leadership; $I_{\mathrm{capital}}$ measures private and public investment available for AI innovation; and $I_{\mathrm{industry}}$ measures the rate of AI adoption in productive sectors.

Unproductive institutional dissipation is modeled as
\begin{equation}
\label{eq:dfr}
D^{u}_{\mathrm{FR}}
=
v_a D_{\mathrm{admin}}
+
v_f D_{\mathrm{coord}}
+
v_e D_{\mathrm{energy}}
+
v_g D_{\mathrm{reg}}^{\mathrm{unc}}
+
v_b D_{\mathrm{talent}},
\end{equation}
where $D_{\mathrm{admin}}$ captures administrative cycles that reduce research and deployment bandwidth; $D_{\mathrm{coord}}$ captures coordination frictions between universities, grandes \'{e}coles, startups, public agencies, and industry; $D_{\mathrm{energy}}$ represents energy and infrastructure bottlenecks for compute; $D_{\mathrm{reg}}^{\mathrm{unc}}$ captures legal uncertainty and transition costs rather than regulation per se; and $D_{\mathrm{talent}}$ measures net mobility of high-skilled AI talent toward external ecosystems.

Constructive governance capacity is represented by
\begin{equation}
G_{\mathrm{FR}}
=
g_s G_{\mathrm{safety}}
+
g_d G_{\mathrm{data}}
+
g_m G_{\mathrm{market}}
+
g_p G_{\mathrm{public}},
\end{equation}
where $G_{\mathrm{safety}}$ denotes credible safety, audit, and certification mechanisms; $G_{\mathrm{data}}$ denotes trusted data spaces and interoperable data governance; $G_{\mathrm{market}}$ denotes clear deployment pathways that reduce uncertainty for firms; and $G_{\mathrm{public}}$ denotes public-sector experimentation and procurement capacity.

This separation is essential for avoiding a misleading interpretation of regulation as a purely negative term. Privacy protection, safety certification, auditability, and accountability may reduce short-term speed, but they can also increase long-term adoption by creating trust and reducing systemic risk. The model therefore distinguishes between \emph{unproductive dissipation}, which should be reduced, and \emph{constructive governance}, which should be designed and strengthened. The policy implication is not deregulation, but better institutional design.

For policy diagnosis, a parsimonious HCLM ratio can be written as
\begin{equation}
\label{eq:national_ratio}
R_{\mathrm{FR}}
=
\frac{\chi I_{\mathrm{FR}}+\lambda_G G_{\mathrm{FR}}}{D^{u}_{\mathrm{FR}}+\epsilon},
\end{equation}
where $\lambda_G\geq0$ expresses the contribution of constructive governance to effective learning capacity. In the simplified tables and simulations below, $G_{\mathrm{FR}}$ is not estimated separately; the numerical values should therefore be interpreted as a reduced-form diagnostic scenario, not as a validated measurement of national AI capability.

This ratio is not an official metric and should not be treated as a national ranking device. It can only be approximated through carefully justified proxies. For example, $I_{\mathrm{compute}}$ may be proxied by accessible GPU-hours or national high-performance AI compute capacity; $I_{\mathrm{talent}}$ by AI graduates, PhD completions, and high-skilled immigration; $I_{\mathrm{industry}}$ by AI adoption rates among firms; $D_{\mathrm{admin}}$ by grant-cycle duration and researcher administrative load; $D_{\mathrm{coord}}$ by interoperability and governance fragmentation indicators; and $G_{\mathrm{FR}}$ by the existence and usage of sandboxes, certification pathways, trusted data spaces, and public procurement mechanisms.

\subsection{Policy KPI Dashboard}

\begin{table}[H]
\centering
\caption{Operational indicators for an HCLM-based national AI policy dashboard.}
\label{tab:hclm_policy_kpi}
\begin{tabular}{p{0.23\linewidth}p{0.39\linewidth}p{0.28\linewidth}}
\toprule
HCLM variable & Possible policy indicator & Policy interpretation \\
\midrule
$I_{\mathrm{compute}}$ & Accessible AI compute hours, GPU clusters, sovereign cloud capacity & Increase usable access \\
$I_{\mathrm{talent}}$ & AI graduates, PhD students, engineers trained per year, international talent retention & Increase and retain \\
$I_{\mathrm{research}}$ & AI publications, patents, open-source models, ERC/ANR AI projects & Increase quality and translation \\
$I_{\mathrm{capital}}$ & AI venture capital, public investment, industrial R\&D spending & Increase patient scale-up capacity \\
$I_{\mathrm{industry}}$ & AI adoption rate in SMEs, manufacturing, healthcare, energy, public services & Increase diffusion \\
$D_{\mathrm{admin}}$ & Grant-cycle duration, reporting burden, researcher time lost to procedure & Reduce avoidable friction \\
$D_{\mathrm{coord}}$ & Fragmented programs, incompatible infrastructures, weak university--industry interfaces & Consolidate and interoperate \\
$D_{\mathrm{energy}}$ & Data-center energy cost, grid constraints, compute-energy planning risk & Optimize physically \\
$D_{\mathrm{reg}}^{\mathrm{unc}}$ & Ambiguous deployment rules, unclear certification pathways, unresolved liability & Clarify rather than deregulate \\
$D_{\mathrm{talent}}$ & Net AI talent mobility toward external ecosystems & Retain and attract \\
$G_{\mathrm{FR}}$ & Sandboxes, trusted data spaces, audit/certification capacity, public procurement & Strengthen constructive governance \\
\bottomrule
\end{tabular}
\end{table}

This table addresses a key limitation of the purely conceptual formulation: each HCLM variable can be linked to observable policy indicators. The model is therefore not a deterministic predictor of national growth, but a diagnostic framework for identifying where national information flows can be better absorbed, coordinated, or translated into capability. It also prevents a misleading interpretation of regulation as a simple denominator to be minimized. In the proposed reading, the policy goal is to reduce avoidable dissipation while increasing constructive governance.

\subsection{Scaling Condition and Compute-Only Limitation}
\label{sec:scaling_condition}

We now derive a simple condition under which national AI scaling succeeds. Let $S$ denote the national AI scale, combining compute, talent, research intensity, capital, and industrial adoption. Suppose
\begin{equation}
I_{\mathrm{FR}}(S)=aS^\alpha,
\qquad
D_{\mathrm{FR}}(S)=bS^\gamma,
\end{equation}
with $a,b>0$. Let the national AI performance gap be
\begin{equation}
\Delta(S)=\mathcal{P}_\infty-\mathcal{P}(S),
\end{equation}
where $\mathcal{P}(S)$ is an aggregate AI capability index and $\mathcal{P}_\infty$ is a long-run frontier.

\begin{assumption}[Policy risk response]
There exists $q>0$ such that
\begin{equation}
\Delta(S)
\asymp
\left(
\frac{I_{\mathrm{FR}}(S)}{D_{\mathrm{FR}}(S)+\epsilon}
\right)^{-q}.
\end{equation}
\end{assumption}

\begin{proposition}[National AI scaling condition]
If $I_{\mathrm{FR}}(S)=aS^\alpha$, $D_{\mathrm{FR}}(S)=bS^\gamma$, and $\alpha>\gamma$, then the national AI performance gap satisfies
\begin{equation}
\Delta(S)
\asymp
S^{-q(\alpha-\gamma)}.
\end{equation}
If $\alpha=\gamma$, scaling produces no asymptotic improvement beyond constant factors. If $\alpha<\gamma$, institutional entropy grows faster than information injection and the national AI system stagnates.
\end{proposition}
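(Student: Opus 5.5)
The plan is to substitute the power-law forms directly into the Policy risk response assumption and then analyse the ratio
\[
\rho(S)=\frac{aS^\alpha}{bS^\gamma+\epsilon}
\]
asymptotically as $S\to\infty$. The assumption gives constants $0<c_1\le c_2$ with $c_1\rho(S)^{-q}\le \Delta(S)\le c_2\rho(S)^{-q}$ for all large $S$. Since $t\mapsto t^{-q}$ is monotone and multiplicative, it suffices to show $\rho(S)\asymp S^{\alpha-\gamma}$. The main work is therefore a two-sided bound on $\rho(S)$, after which the relation $\asymp$ transfers through the power $-q$, with the constants $c_1,c_2$ picking up factors $(a/b)^{\mp q}$ and similar.

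For the denominator I treat $\epsilon>0$ as a fixed regularizer. When $\gamma>0$, for $S\ge S_0:=(\epsilon/b)^{1/\gamma}$ we have
\[
bS^\gamma\le bS^\gamma+\epsilon\le 2bS^\gamma,
\]
so $\tfrac{a}{2b}S^{\alpha-\gamma}\le\rho(S)\le\tfrac{a}{b}S^{\alpha-\gamma}$. Raising to $-q$ yields $\Delta(S)\asymp S^{-q(\alpha-\gamma)}$. When $\alpha>\gamma$ the exponent is negative, which gives the first claim: the gap closes polynomially.

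The cases $\alpha=\gamma$ and $\alpha<\gamma$ follow from the same two-sided bound. If $\alpha=\gamma$, then $\rho(S)\asymp a/b$, so $\Delta(S)\asymp 1$: it is bounded above and below by positive constants, and scaling gives no asymptotic improvement beyond constant factors. If $\alpha<\gamma$, then $\rho(S)\to 0$ and $\Delta(S)\asymp S^{q(\gamma-\alpha)}$ grows. I would interpret this as stagnation; more precisely, the capability index falls further from the frontier rather than approaching it. I will note that this is the formal content of "stagnates" here.

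The main obstacle is the treatment of $\epsilon$ and the sign of $\gamma$. If $\gamma\le 0$, the denominator is dominated by the constant $\epsilon$ rather than $bS^\gamma$. Then $\rho(S)\asymp S^\alpha$, and the claimed exponent $\alpha-\gamma$ is correct only when $\gamma=0$, after absorbing $\epsilon$ into the constant $b+\epsilon$. I would therefore state explicitly the implicit hypothesis $\gamma\ge 0$, which is natural because dissipation does not shrink with scale. Under that hypothesis all three cases go through with constants depending only on $a,b,\epsilon,q$ and the constants of the assumption.
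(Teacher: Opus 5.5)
Your proposal is correct and follows the paper's own route: substitute the power laws into the policy risk response assumption, show $R_{\mathrm{FR}}(S)\asymp S^{\alpha-\gamma}$, and transfer this relation through $t\mapsto t^{-q}$ to read off the three cases. Your explicit two-sided bound for $S\ge(\epsilon/b)^{1/\gamma}$ and the added hypothesis $\gamma\ge 0$ make rigorous the paper's informal step ``for large $S$, when $D_{\mathrm{FR}}(S)\gg\epsilon$'', which silently needs $\gamma>0$; you are also right that the stated exponent fails when $\gamma<0$.
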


\begin{proof}
The national information ratio is
\begin{equation}
R_{\mathrm{FR}}(S)
=
\frac{I_{\mathrm{FR}}(S)}{D_{\mathrm{FR}}(S)+\epsilon}.
\end{equation}
For large $S$, when $D_{\mathrm{FR}}(S)\gg \epsilon$,
\begin{equation}
R_{\mathrm{FR}}(S)
\sim
\frac{aS^\alpha}{bS^\gamma}
=
\frac{a}{b}S^{\alpha-\gamma}.
\end{equation}
Using the policy risk response assumption,
\begin{equation}
\Delta(S)
\asymp
R_{\mathrm{FR}}(S)^{-q}
=
\left(\frac{a}{b}\right)^{-q}S^{-q(\alpha-\gamma)}.
\end{equation}
If $\alpha>\gamma$, the exponent is negative and the gap decreases. If $\alpha=\gamma$, the ratio is constant and the gap does not scale. If $\alpha<\gamma$, the ratio decreases and the gap fails to improve.
\end{proof}

This proposition formalizes the central policy insight: increasing national AI scale is insufficient unless information injection grows faster than institutional entropy. It provides a mathematical explanation for why GPU investment alone cannot guarantee AI sovereignty.

The proposition should not be read as an empirical theorem about national growth. It is a sign condition: if avoidable institutional dissipation scales at least as fast as information injection, then additional resources may fail to translate into capability. Estimating whether $\alpha>\gamma$ for real countries requires historical data, robustness checks, and comparison with alternative econometric specifications.

\begin{theorem}[Monotone learning condition]
Let the national residual risk be written as
\begin{equation}
L(S)=L_\infty+\Psi(R(S)),
\end{equation}
where $R(S)=I(S)/(D(S)+\epsilon)$ and $\Psi'(r)<0$. Then
\begin{equation}
\frac{dR}{dS}>0
\quad\Longrightarrow\quad
\frac{dL}{dS}<0.
\end{equation}
Moreover, if $I(S)=aS^\alpha$ and $D(S)=bS^\gamma$, then for large $S$ the condition $dR/dS>0$ is equivalent to $\alpha>\gamma$.
\end{theorem}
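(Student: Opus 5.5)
The first implication is a chain-rule argument. Since $L(S)=L_\infty+\Psi(R(S))$ with $L_\infty$ constant, we have
\begin{equation*}
\frac{dL}{dS}=\Psi'(R(S))\,\frac{dR}{dS}.
\end{equation*}
This requires $\Psi$ to be differentiable along the range of $R$ and $R$ to be differentiable in $S$. Both are implicit in the hypothesis $\Psi'(r)<0$ and in the smooth power-law specification, and I will state them explicitly. Because $\Psi'(R(S))<0$, the product is negative whenever $dR/dS>0$. No earlier result is needed for this part.

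For the power-law part I will compute the derivative directly rather than appeal to the asymptotic ratio in the proof of the scaling proposition. That ratio dropped $\epsilon$, and dropping it is exactly what hides the subtle case. With $R(S)=aS^\alpha/(bS^\gamma+\epsilon)$,
\begin{equation*}
\frac{dR}{dS}=\frac{aS^{\alpha-1}\bigl[(\alpha-\gamma)bS^\gamma+\alpha\epsilon\bigr]}{(bS^\gamma+\epsilon)^2}.
\end{equation*}
The prefactor is positive, so the sign of $dR/dS$ is the sign of $(\alpha-\gamma)bS^\gamma+\alpha\epsilon$. It is cleaner to track the elasticity
\begin{equation*}
\frac{S\,R'(S)}{R(S)}=\alpha-\gamma\,\frac{bS^\gamma}{bS^\gamma+\epsilon}.
\end{equation*}
In the scaling regime $\gamma>0$, the one in which dissipation actually grows with scale, this elasticity tends to $\alpha-\gamma$ as $S\to\infty$. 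Hence $\alpha>\gamma$ gives $dR/dS>0$ for all sufficiently large $S$, and $\alpha<\gamma$ gives $dR/dS<0$ for all sufficiently large $S$.

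The main obstacle is the boundary case $\alpha=\gamma$, where the literal equivalence fails. There the numerator reduces to $\alpha\epsilon$, so with $\alpha>0$ and $\epsilon>0$ we get $dR/dS>0$ for every $S$, even though $R(S)\to a/b$ is asymptotically flat. I will make the second claim precise in one of two ways and state which one is proved. The first is to read ``$dR/dS>0$ for large $S$'' in the same regime $D(S)\gg\epsilon$ used in the proof of the scaling proposition, i.e.\ with $\epsilon$ treated as negligible. Then the sign is exactly that of $\alpha-\gamma$ and the equivalence holds. The second is to interpret the condition as an asymptotically positive elasticity, $\liminf_{S\to\infty} S R'/R>0$. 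By the formula above this is equivalent to $\alpha>\gamma$ when $\gamma>0$. For completeness I will remark that if $\gamma\le 0$ the denominator stays bounded, so the elasticity tends to $\alpha$ (for $\gamma<0$) and the relevant condition becomes $\alpha>0$. This is why the theorem should be read under the standing assumption $\gamma>0$.
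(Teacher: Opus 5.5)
Your first half is the paper's argument: the chain rule plus $\Psi'<0$. For the power-law half you take a different and more careful route. The paper assumes $bS^\gamma\gg\epsilon$, writes $R(S)\sim (a/b)S^{\alpha-\gamma}$, and reads monotonicity off that asymptotic form. You instead differentiate the exact $R(S)=aS^\alpha/(bS^\gamma+\epsilon)$ and track the sign of $(\alpha-\gamma)bS^\gamma+\alpha\epsilon$, or equivalently the elasticity $\alpha-\gamma\,bS^\gamma/(bS^\gamma+\epsilon)$. Your computation is correct, and it exposes a real defect. The paper's step turns an asymptotic equivalence of functions into a claim about the sign of a derivative, which is not valid in general. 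It fails exactly at $\alpha=\gamma>0$: there $R'(S)=a\alpha\epsilon S^{\alpha-1}/(bS^\alpha+\epsilon)^2>0$ for every $S$, yet $\alpha\not>\gamma$. So with $\epsilon>0$ the stated equivalence is false, and a reformulation like yours is genuinely needed. The paper's version buys brevity and consistency with the $D\gg\epsilon$ convention of the scaling propositions; yours buys correctness and pinpoints where and why the claim breaks.

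Of your two repairs, the first is exactly the paper's argument made rigorous, but only if you literally set $\epsilon=0$, so that $R'=(a/b)(\alpha-\gamma)S^{\alpha-\gamma-1}$. ``Treating $\epsilon$ as negligible'' with $\epsilon>0$ fixed does not suffice, because when $\alpha=\gamma$ the $\alpha\epsilon$ term decides the sign however large $S$ is. Your second reading, $\liminf_{S\to\infty}SR'/R>0$, is rigorous with $\epsilon>0$ and is the better one to prove. One small refinement: the standing assumption can be $\gamma\ge 0$ rather than $\gamma>0$. At $\gamma=0$ the elasticity is identically $\alpha$, so the equivalence with $\alpha>\gamma$ holds exactly under either reading.
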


\begin{proof}
By the chain rule,
\begin{equation}
\frac{dL}{dS}=\Psi'(R(S))\frac{dR}{dS}.
\end{equation}
Since $\Psi'(R(S))<0$, any positive derivative $dR/dS>0$ implies $dL/dS<0$. For $I(S)=aS^\alpha$ and $D(S)=bS^\gamma$,
\begin{equation}
R(S)=\frac{aS^\alpha}{bS^\gamma+\epsilon}.
\end{equation}
When $S$ is large and $bS^\gamma\gg\epsilon$,
\begin{equation}
R(S)\sim\frac{a}{b}S^{\alpha-\gamma}.
\end{equation}
Thus $R(S)$ increases with $S$ if and only if $\alpha>\gamma$.
\end{proof}

\begin{remark}[Critical institutional threshold]
The equality $\alpha=\gamma$ defines a critical policy boundary. Below it, coordination frictions grow at least as fast as information injection, and the national learning system cannot improve through scale. Above it, scaling becomes productive, provided that the growth of $R(S)$ remains controlled and does not generate instability, inequality, or energy complexity.
\end{remark}

\subsubsection{A No-Free-Lunch Lemma for AI Sovereignty}

\begin{lemma}[No-free-lunch for compute-only policy]
Assume that compute investment increases with scale,
\begin{equation}
I_{\mathrm{compute}}(S)=cS^\alpha,
\end{equation}
but talent, research absorption, industrial adoption, and institutional reform remain bounded. If institutional entropy grows as $D_{\mathrm{FR}}(S)=bS^\gamma$ with $\gamma\geq \alpha$, then
\begin{equation}
\limsup_{S\to\infty}R_{\mathrm{FR}}(S)<\infty.
\end{equation}
Consequently, compute-only scaling cannot produce unbounded national AI capability improvement.
\end{lemma}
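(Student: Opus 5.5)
The plan is to make the hypotheses quantitative and then bound the ratio $R_{\mathrm{FR}}(S)=I_{\mathrm{FR}}(S)/(D_{\mathrm{FR}}(S)+\epsilon)$ directly. I will drop the absorptive factor $\chi\le 1$ and the governance term, or absorb them into bounded constants. The first step is to turn ``talent, research absorption, industrial adoption, and institutional reform remain bounded'' into a concrete assumption. Using the weighted decomposition~\eqref{eq:ifr}, I take the non-compute components to satisfy
\[
w_t I_{\mathrm{talent}}+w_r I_{\mathrm{research}}+w_k I_{\mathrm{capital}}+w_i I_{\mathrm{industry}}\le M
\]
for some constant $M<\infty$ and all $S$, with all terms nonnegative. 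This gives
\[
I_{\mathrm{FR}}(S)\le w_c c S^\alpha + M .
\]
If $\chi$ is retained, I use $\chi\le 1$. If $G_{\mathrm{FR}}$ is retained as in~\eqref{eq:national_ratio}, I treat it as a bounded reform term, so that $\lambda_G G_{\mathrm{FR}}\le M'$, and fold $M'$ into $M$.

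The second step is the elementary estimate. Since $b>0$ and $\epsilon>0$, the denominator $bS^\gamma+\epsilon$ is at least both $\epsilon$ and $bS^\gamma$. Hence
\[
R_{\mathrm{FR}}(S)\le \frac{w_c c S^\alpha}{bS^\gamma}+\frac{M}{\epsilon}=\frac{w_c c}{b}S^{\alpha-\gamma}+\frac{M}{\epsilon}.
\]
Because $\gamma\ge\alpha$, the exponent $\alpha-\gamma$ is nonpositive, so $S^{\alpha-\gamma}\le 1$ for $S\ge 1$. This yields $\limsup_{S\to\infty}R_{\mathrm{FR}}(S)\le w_c c/b+M/\epsilon<\infty$. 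When $\gamma>\alpha$ the limit superior is in fact at most $M/\epsilon$, since the compute contribution vanishes. When $\gamma=\alpha$ this is just the bounded-ratio case of the National AI scaling condition proposition.

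The third step derives the ``consequently'' clause from the Monotone learning condition theorem and the Policy risk response assumption. Because $\Psi$ is decreasing, a bounded $R$ gives $L(S)\ge L_\infty+\Psi(\sup_S R)$, which is strictly above $L_\infty$ whenever $\Psi$ is strictly decreasing. Likewise $\Delta(S)\asymp R^{-q}$ stays bounded away from zero. So capability cannot improve without bound, and the residual gap does not close.

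The main obstacle is not the computation but making the boundedness hypothesis precise enough for the statement to be true. If ``bounded'' were read as bounded growth rates rather than bounded levels, or if $\chi$ were allowed to scale with $S$, the conclusion could fail. I would therefore state explicitly, at the start of the proof, that the non-compute injections and governance terms are uniformly bounded in level and nonnegative. I would also note that the conclusion needs $\epsilon>0$ for the $M/\epsilon$ term; if $\epsilon=0$, one instead uses $D_{\mathrm{FR}}(S)\ge b\,S_0^\gamma$ for $S\ge S_0$, which requires $\gamma\ge 0$.
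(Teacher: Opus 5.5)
Your proposal is correct and follows the paper's argument: write $I_{\mathrm{FR}}(S)=w_c cS^\alpha+O(1)$, divide by $bS^\gamma+\epsilon$, and use $\gamma\geq\alpha$. The only difference is in the final step. You bound the two pieces separately to get $\limsup_{S\to\infty}R_{\mathrm{FR}}(S)\leq w_c c/b+M/\epsilon$, whereas the paper computes the exact limits ($0$ if $\gamma>\alpha$, $w_c c/b$ if $\gamma=\alpha$); you could obtain these sharper limits too, since ``compute increases with scale'' forces $\alpha>0$, hence $bS^\gamma\to\infty$ and the bounded remainder vanishes in the limit.
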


\begin{proof}
If only compute grows while all other injection terms remain bounded, then
\begin{equation}
I_{\mathrm{FR}}(S)=w_c cS^\alpha+O(1).
\end{equation}
Since $D_{\mathrm{FR}}(S)=bS^\gamma$,
\begin{equation}
R_{\mathrm{FR}}(S)=\frac{w_c cS^\alpha+O(1)}{bS^\gamma+\epsilon}.
\end{equation}
If $\gamma>\alpha$, then $R_{\mathrm{FR}}(S)\to 0$. If $\gamma=\alpha$, then
\begin{equation}
R_{\mathrm{FR}}(S)\to \frac{w_c c}{b}.
\end{equation}
In both cases, the effective information ratio remains bounded. Thus compute-only scaling cannot generate sustained capability improvement.
\end{proof}

This lemma captures a core limitation of a purely technology-centric AI policy. Compute is necessary, but it is not sufficient. Without talent, industrial absorption, institutional simplification, and energy strategy, compute investment is dissipated by the national system.

\subsection{Conditional Scaling Law for National AI}
\label{sec:conditional_scaling}
Empirical neural scaling laws show that loss can decrease predictably with compute, dataset size, and model size \citep{kaplan2020scaling}. HCLM does not treat scaling laws as automatic. It interprets them conditionally: performance improves when information injection and entropy dissipation follow compatible scale-dependent relationships.

Let $S$ denote national AI scale, combining compute, data, talent, industrial deployment, and capital. Suppose
\begin{equation}
I(S)=aS^{\alpha}, \qquad D(S)=bS^{\gamma},
\end{equation}
with $a,b>0$. Define the effective information ratio
\begin{equation}
R(S)=\frac{I(S)}{D(S)}=\frac{a}{b}S^{\alpha-\gamma}.
\end{equation}
Assume that the national AI residual risk $\mathcal{L}(S)-\mathcal{L}_{\infty}$ responds to this ratio as
\begin{equation}
\mathcal{L}(S)-\mathcal{L}_{\infty}\asymp R(S)^{-q}, \qquad q>0.
\end{equation}
Then
\begin{equation}
\mathcal{L}(S)-\mathcal{L}_{\infty}\asymp S^{-q(\alpha-\gamma)}.
\end{equation}

\begin{proposition}[Balanced national AI scaling]
If $\alpha>\gamma$ and the response function is locally power-like, then national AI performance improves according to a power law with exponent $q(\alpha-\gamma)$. If $\alpha\leq\gamma$, scale alone does not yield improvement. If $\alpha\gg\gamma$, information injection may grow faster than institutions can stabilize it, producing instability.
\end{proposition}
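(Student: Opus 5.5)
The plan is to reduce the statement to the computation already displayed just before it, and then treat the three regimes $\alpha>\gamma$, $\alpha\leq\gamma$ and $\alpha\gg\gamma$ in turn. First, I substitute $I(S)=aS^\alpha$ and $D(S)=bS^\gamma$ into $R(S)=I(S)/D(S)$ to obtain $R(S)=(a/b)S^{\alpha-\gamma}$. I then read ``the response function is locally power-like'' as the assumption $\mathcal{L}(S)-\mathcal{L}_\infty\asymp R(S)^{-q}$ with $q>0$, holding on the range of $R$ that is actually visited as $S$ grows. Composing the two gives $\mathcal{L}(S)-\mathcal{L}_\infty\asymp (a/b)^{-q}S^{-q(\alpha-\gamma)}$. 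When $\alpha>\gamma$ the exponent $-q(\alpha-\gamma)$ is strictly negative, so the residual risk decays as a power law with exponent $q(\alpha-\gamma)$. This is exactly the argument of the National AI scaling condition proposition, with $\epsilon$ dropped. If one keeps $\epsilon$, the first step is to note that $bS^\gamma\gg\epsilon$ for large $S$ whenever $\gamma>0$, so the asymptotics are unchanged.

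For $\alpha\leq\gamma$, I would invoke the Monotone learning condition theorem. If $\Psi(r)=\mathcal{L}-\mathcal{L}_\infty$ is decreasing in $r$, then $d\mathcal{L}/dS$ has the sign opposite to $dR/dS=(a/b)(\alpha-\gamma)S^{\alpha-\gamma-1}$, which is $\leq 0$. Hence $\mathcal{L}$ is nonincreasing in $R$ only if $R$ grows, and here it does not. In the case $\alpha=\gamma$, $R$ is constant and the risk stays at a fixed constant multiple. In the case $\alpha<\gamma$, $R\to0$ and the risk does not improve; indeed it increases. This matches the No-free-lunch lemma's conclusion that $R$ stays bounded, so scale alone yields no improvement.

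The third clause, that $\alpha\gg\gamma$ produces instability, is the main obstacle, because nothing in the stated hypotheses defines instability. The plan is to make it precise by introducing the institutional stabilization capacity as an explicit bound. Concretely, I would posit that the power-like response is valid only on a window $R\leq R_{\max}$, beyond which absorption saturates; formally, $\chi$ falls or the term $S_{\mathrm{mis}}$ in \eqref{eq:national_capacity} becomes active. I would then show that $R(S)$ exits this window at scale $S^*=(bR_{\max}/a)^{1/(\alpha-\gamma)}$. Since $S^*$ shrinks as $\alpha-\gamma$ grows, a large gap means the locally power-like regime is left quickly, and the growth rate $dR/dS$ becomes large. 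I would present this clause as an interpretive consequence of the locality hypothesis rather than a derived theorem, consistent with the paper's heuristic status, and state the added modeling assumption explicitly.
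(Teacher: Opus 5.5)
Your treatment of the first clause is the paper's proof. You substitute $R(S)=(a/b)S^{\alpha-\gamma}$ into $\mathcal{L}(S)-\mathcal{L}_\infty\asymp R(S)^{-q}$ and read off the sign of the exponent. The paper stops there: it disposes of $\alpha\le\gamma$ with the remark that the excess loss decreases only when $\alpha>\gamma$, and gives no argument for the $\alpha\gg\gamma$ clause. Your other two steps therefore go beyond it.

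For $\alpha\le\gamma$, the paper's implicit route needs only the $\asymp$ relation. The excess risk is $\asymp$ a constant if $\alpha=\gamma$, and $\asymp S^{q(\gamma-\alpha)}\to\infty$ if $\alpha<\gamma$. Your appeal to the Monotone learning condition buys a pointwise conclusion, $d\mathcal{L}/dS\ge 0$ for every $S$. It does so only under the extra hypothesis that $\mathcal{L}-\mathcal{L}_\infty=\Psi(R)$ exactly, with $\Psi'<0$, which $\asymp$ does not provide. Also, your garbled sentence should read that $\mathcal{L}$ decreases in $S$ only if $R$ increases in $S$.

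For the third clause, your window is the honest move. It makes explicit that the clause contradicts a \emph{global} power law and can only come from the word ``locally''. Two refinements are needed:
\begin{enumerate}
\item $S^*=(bR_{\max}/a)^{1/(\alpha-\gamma)}$ shrinks with $\alpha-\gamma$ only when $a/b<R_{\max}$, i.e.\ when the system starts inside the window at $S=1$. More invariantly, for a fixed initial ratio $R(S_0)<R_{\max}$, the log-scale distance to exit is $\log(S^*/S_0)=\log(R_{\max}/R(S_0))/(\alpha-\gamma)$.
\item Leaving the window yields instability rather than mere saturation only if you posit that the risk turns upward beyond $R_{\max}$. That is exactly the role of the paper's $\eta_{\mathrm{inst}}(R(S))$ term in the simulation section, which is a more natural anchor for your added assumption than $\chi$ or $S_{\mathrm{mis}}$.
\end{enumerate}
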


\begin{proof}
The result follows from substituting $R(S)=(a/b)S^{\alpha-\gamma}$ into $\mathcal{L}(S)-\mathcal{L}_{\infty}\asymp R(S)^{-q}$:
\begin{equation}
\mathcal{L}(S)-\mathcal{L}_{\infty}
\asymp
\left(\frac{a}{b}S^{\alpha-\gamma}\right)^{-q}
=\left(\frac{a}{b}\right)^{-q}S^{-q(\alpha-\gamma)}.
\end{equation}
The excess loss decreases with scale only when $\alpha>\gamma$.
\end{proof}

\section{Comparative HCLM Diagnosis: United States, China, and France}
\label{sec:comparative_hclm}

The theoretical sections above provide a cautious foundation for using HCLM as a national learning viewpoint. To make the argument policy-relevant, we now connect this theory with a comparative diagnosis of the United States, China, and France/Europe. The objective is not to construct an official ranking or measurement instrument, but to show how HCLM can organize heterogeneous evidence into an interpretable policy discussion.

The factual anchors used for this international comparison are drawn from the Stanford AI Index Report 2025, the French AI Commission report, and public announcements on AI investment in France \citep{stanfordaiindex2025,commissionIA2024,gouvernement2024ai,reuters2025aiinvestment}. The United States illustrates a high-information-injection regime: strong venture capital, frontier AI firms, large-scale compute access, platform concentration, and rapid commercialization. China illustrates a coordinated scaling regime: strong industrial policy, rapid deployment, large research output, and strategic state-led coordination. France and Europe possess strong scientific, mathematical, engineering, industrial, and public-interest assets, while their effective learning ratio may be improved by strengthening coordination, private-scale financing, administrative streamlining, industrial absorption, and access to frontier compute.

\subsection{Construction and Reliability of the Diagnostic Proxies}
\label{subsec:proxy_reliability}

The numerical indicators reported in Table~\ref{tab:integrated_hclm_comparison} should be interpreted carefully. They are not official econometric estimates and should not be read as definitive national rankings. They are normalized HCLM diagnostic proxies computed in Python from documented empirical anchors and transparent scenario assumptions. The table is therefore an operational illustration, not a validation of the theory. To avoid circular reasoning, the values should be treated as placeholders that make the assumptions visible and contestable. Future work must replace the scenario parameters with official country-year datasets from the Stanford AI Index, OECD, Eurostat, WIPO, OpenAlex, Crunchbase or PitchBook, national compute statistics, energy statistics, and surveys of AI adoption, and must report sensitivity analyses for the weights $w_k$ and $v_j$.

For each measurable information-injection component $x_{k,c}$ associated with country or scenario $c$, we define a normalized score
\begin{equation}
\widetilde{x}_{k,c}
=
\frac{x_{k,c}-\min_{c'}x_{k,c'}}
{\max_{c'}x_{k,c'}-\min_{c'}x_{k,c'}+\varepsilon}.
\end{equation}
The information-injection proxy is then
\begin{equation}
I_c
=
\sum_{k\in\mathcal{I}} w_k\widetilde{x}_{k,c},
\qquad
\sum_{k\in\mathcal{I}}w_k=1,
\qquad
w_k\geq0.
\end{equation}
Analogously, for each institutional-dissipation component $d_{j,c}$,
\begin{equation}
D_c
=
\sum_{j\in\mathcal{D}} v_j\widetilde{d}_{j,c},
\qquad
\sum_{j\in\mathcal{D}}v_j=1,
\qquad
v_j\geq0.
\end{equation}
The HCLM learning-efficiency ratio and the bounded capability proxy are defined as
\begin{equation}
R_c=\frac{I_c}{D_c+\varepsilon},
\qquad
C_c=\frac{R_c}{1+R_c}.
\end{equation}
The bounded transformation $C_c=R_c/(1+R_c)$ prevents the capability proxy from being interpreted as unbounded national capability. It only expresses the idea that higher usable information flow improves the modeled learning capacity with diminishing returns.

The country-level values should therefore be read as a policy-diagnostic scenario analysis grounded in documented evidence, not as a direct measurement of national AI capability. The contribution of the table is methodological: it shows how HCLM can organize heterogeneous evidence into transparent assumptions that can later be tested econometrically.

The table does not prove that HCLM is correct. It only shows what the HCLM interpretation would imply under explicit assumptions. A stronger empirical contribution would require out-of-sample prediction, alternative weighting schemes, confidence intervals for the constructed indices, and comparison with baseline models based on raw investment, compute, or publication volume alone.

The HCLM view therefore suggests that France should not be interpreted through a deficit narrative relative to the United States or China. Rather, its strategic opportunity is to increase useful information injection while constructively managing institutional entropy. This can be summarized through the national learning ratio
\begin{equation}
R_{\mathrm{FR}}
=
\frac{I_{\mathrm{FR}}}{D_{\mathrm{FR}}+\epsilon}.
\end{equation}
The strategic question is not only how much France invests in AI, but how much of this investment is transformed into absorbed, coordinated, and deployable national capability.

\begin{table}[H]
\centering
\caption{Integrated HCLM comparison and policy diagnosis. The numerical values are normalized HCLM diagnostic proxies computed in Python from documented empirical anchors and transparent scenario assumptions; they are not official econometric estimates or definitive national rankings.}
\label{tab:integrated_hclm_comparison}
\resizebox{\textwidth}{!}{
\begin{tabular}{lccccp{4.5cm}p{4.9cm}}
\toprule
Country or scenario & $I$ proxy & $D$ proxy & $R=I/D$ & Capability proxy & HCLM interpretation & Policy implication \\
\midrule
United States & 0.932 & 0.380 & 2.454 & 0.682 & High information injection driven by venture capital, frontier firms, compute access, entrepreneurial speed, and platform-scale deployment. & Maintain frontier scale while controlling energy, social risk, concentration, and alignment challenges. \\
China & 0.473 & 0.440 & 1.076 & 0.516 & Intermediate information ratio supported by coordination, industrial policy, research output, and rapid diffusion, despite constraints in openness and global compute access. & Strengthen innovation quality, openness, and global trust while preserving coordinated deployment capacity. \\
France / Europe proxy & 0.246 & 0.570 & 0.432 & 0.329 & Strong research, mathematics, engineering, public-sector legitimacy, and industrial assets, with opportunities to improve coordination, administrative efficiency, scaling capacity, and industrial absorption. & Streamline $D_{\mathrm{admin}}$, consolidate $D_{\mathrm{coord}}$, and strengthen industrial absorption; increase compute accessibility, talent circulation, and industrial testbeds. \\
France HCLM reform scenario & 0.374 & 0.414 & 0.902 & 0.478 & The reform scenario improves by increasing absorbed information and constructively lowering institutional dissipation, without requiring France to imitate either the U.S. or Chinese model. & Build sovereign compute commons, adaptive regulation, researcher autonomy, industrial AI absorption, trusted data spaces, and European coordination. \\
\bottomrule
\end{tabular}}
\end{table}

Table~\ref{tab:integrated_hclm_comparison} provides a comparative policy diagnosis grounded in the HCLM theory developed above. The U.S. advantage appears mainly through a high numerator $I$: capital, compute, frontier firms, and rapid commercialization. China appears as a coordination-intensive regime, where industrial policy and state-scale deployment reduce some forms of dissipation. France and Europe do not lack knowledge, talent, or scientific legitimacy; their strategic opportunity lies in improving the conversion of distributed knowledge into coordinated capability.

\begin{figure}[H]
\centering
\includegraphics[width=0.85\linewidth]{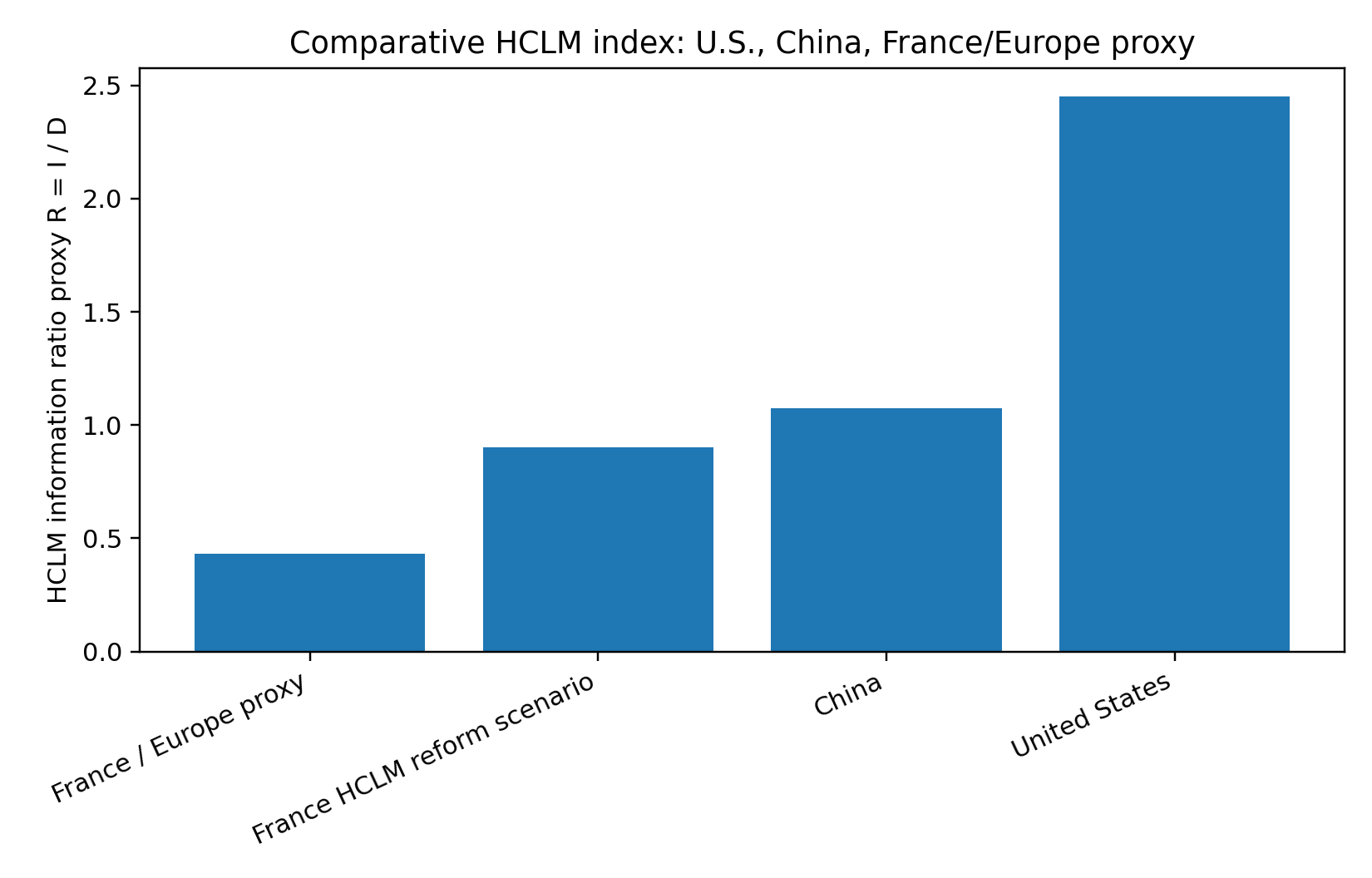}
\caption{Comparative reduced-form HCLM information ratio. Values are normalized diagnostic proxies, not official measurements. They illustrate how different information-injection and institutional-dissipation assumptions can be compared under the HCLM framework.}
\label{fig:hclm_country_ratio}
\end{figure}

\begin{figure}[H]
\centering
\includegraphics[width=0.85\linewidth]{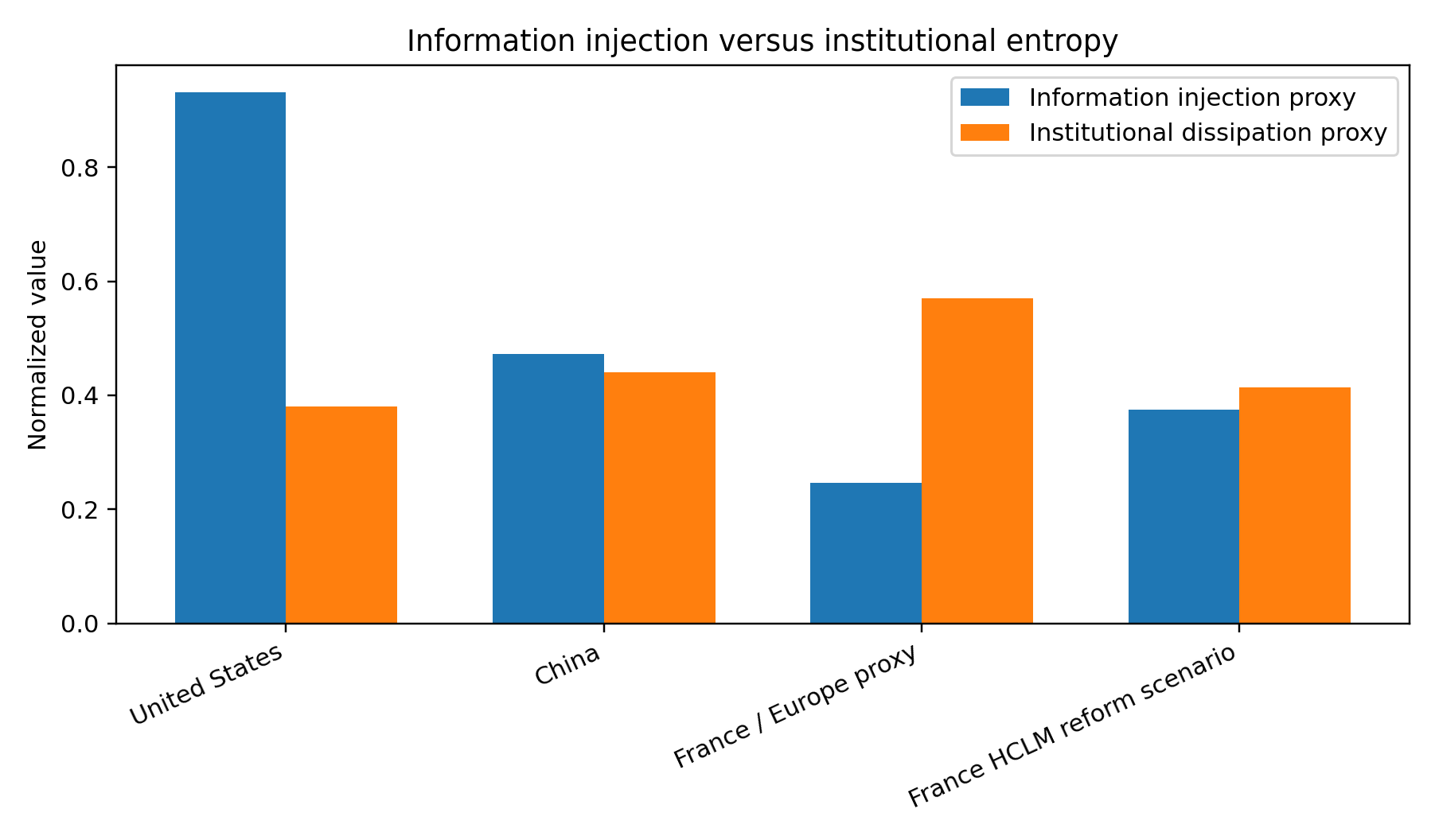}
\caption{Information injection and institutional entropy proxies. Values are Python-generated diagnostic proxies. The France HCLM reform scenario is a counterfactual policy scenario, not a historical observation.}
\label{fig:hclm_country_injection_entropy}
\end{figure}

This interpretation directly supports the signature statement of the paper:
\begin{equation}
\boxed{
\text{AI sovereignty}
=
\text{information flow}
+
\text{entropy control}
+
\text{incentive alignment}.
}
\end{equation}
France should therefore avoid a purely compute-centered AI strategy. Compute increases $I_{\mathrm{compute}}$, but its national effect remains limited if talent, research, industry, energy, and regulation are weakly connected. In HCLM terms, France's strategic objective is not simply to maximize the numerator $I_{\mathrm{FR}}$, but to increase the ratio $I_{\mathrm{FR}}/(D_{\mathrm{FR}}+\epsilon)$ in a stable, human-centered, and incentive-compatible way.

\section{Strategic Coordination, Game Theory, and Mechanism Design}
\label{sec:strategic_coordination}
\label{sec:game_theory}
The HCLM formulation treats France as a national learning system governed by information injection and institutional entropy dissipation. However, national AI strategy is not controlled by a single planner. It emerges from strategic interactions among multiple agents: the state, universities, public research organizations, startups, large firms, investors, regulators, energy providers, and citizens. This motivates a game-theoretic extension of HCLM.

The purpose of this section is deliberately modest. Public-goods under-provision, prisoner's-dilemma coordination failures, and mechanism design are standard results in economics and game theory. They are not presented here as new theorems. Their role is to prevent HCLM from being interpreted as a single-planner optimization model and to show how the variables $I_{\mathrm{FR}}$, $D_{\mathrm{FR}}$, and $G_{\mathrm{FR}}$ can emerge from decentralized incentives.

Let the national AI ecosystem be represented by a finite set of agents
\begin{equation}
\mathcal{N}=\{1,\ldots,N\}.
\end{equation}
Each agent $i$ chooses a strategy
\begin{equation}
s_i\in \mathcal{S}_i,
\end{equation}
where $s_i$ may represent investment in AI, data sharing, compute contribution, research openness, regulatory strictness, talent training, or industrial adoption. The collective strategy is
\begin{equation}
s=(s_1,\ldots,s_N).
\end{equation}
The national information injection and entropy dissipation are now functions of the strategic profile:
\begin{equation}
I_{\mathrm{FR}}=I_{\mathrm{FR}}(s),
\qquad
D_{\mathrm{FR}}=D_{\mathrm{FR}}(s).
\end{equation}
The effective national learning ratio becomes
\begin{equation}
R_{\mathrm{FR}}(s)=\frac{I_{\mathrm{FR}}(s)}{D_{\mathrm{FR}}(s)+\epsilon}.
\end{equation}

Each agent maximizes a payoff
\begin{equation}
U_i(s)=B_i(R_{\mathrm{FR}}(s))-C_i(s_i)-\Phi_i(s),
\end{equation}
where $B_i$ is the benefit obtained from national AI capability, $C_i$ is the agent-specific cost of contributing to AI development, and $\Phi_i$ is a friction or risk term representing regulatory transition cost, competitive exposure, privacy exposure, or coordination cost.

This formulation clarifies a key policy issue: the socially optimal AI strategy and the decentralized Nash equilibrium need not coincide \citep{nash1950equilibrium,vonneumann1944games,myerson1991game}. Even if all agents benefit from a strong national AI ecosystem, each may under-invest in shared infrastructure, talent, open research, or data interoperability because the benefits are partially public while the costs are private.

\subsection{A Public-Goods Game of AI Capability}

To make the argument explicit, consider a simplified public-goods game. Each agent chooses a contribution $x_i\geq 0$, representing investment in shared AI capability. Total information injection is
\begin{equation}
I(x)=A\sum_{i=1}^{N}x_i,
\end{equation}
while institutional dissipation is
\begin{equation}
D(x)=D_0+\delta\left(\sum_{i=1}^{N}x_i\right)^2.
\end{equation}
The quadratic term represents coordination complexity: as the system scales without governance, coordination costs, parallel infrastructures, and administrative complexity may increase.

Let the payoff of agent $i$ be
\begin{equation}
U_i(x)=\theta_i\log\left(1+\frac{I(x)}{D(x)+\epsilon}\right)-\frac{c_i}{2}x_i^2.
\end{equation}
The social welfare is
\begin{equation}
W(x)=\sum_{i=1}^{N}U_i(x).
\end{equation}

\begin{proposition}[Standard under-investment in decentralized AI ecosystems]
Assume $A>0$, $D_0>0$, $\delta\geq0$, $c_i>0$, and $\theta_i>0$. In the public-goods AI capability game above, the decentralized Nash equilibrium generally under-invests relative to the social optimum:
\begin{equation}
\sum_i x_i^{\mathrm{Nash}}\leq \sum_i x_i^{\mathrm{Social}},
\end{equation}
whenever each agent captures only a fraction of the collective benefit of $R_{\mathrm{FR}}$.
\end{proposition}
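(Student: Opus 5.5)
The plan is to reduce both the decentralized equilibrium and the planner's problem to one-dimensional conditions on the aggregate contribution $X=\sum_i x_i$, and then compare them by monotone comparative statics. Write $K=D_0+\epsilon>0$ and
\begin{equation}
f(X)=\log\left(1+\frac{AX}{K+\delta X^2}\right),
\end{equation}
so that $U_i(x)=\theta_i f(X)-\tfrac{c_i}{2}x_i^2$. A direct computation shows that $f'(X)>0$ on $[0,X^\ast)$ and $f'(X)<0$ beyond, where $X^\ast=\sqrt{K/\delta}$ (with $X^\ast=\infty$ if $\delta=0$). Neither the equilibrium nor the optimum can lie beyond $X^\ast$, since contributions there lower $f$ and raise costs. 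Hence all relevant aggregates lie in $[0,X^\ast]$.

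\emph{Step 1 (Nash aggregate).} Each agent takes $X_{-i}$ as given, and its first-order condition is $\theta_i f'(X)=c_i x_i$, with the boundary case $x_i=0$ if $f'\le 0$. On the region where $f'>0$, every $x_i$ is interior, so $x_i^{\mathrm{Nash}}=\theta_i f'(X^N)/c_i$. Summing over $i$ gives the fixed-point condition
\begin{equation}
X^N=\kappa_N f'(X^N),\qquad \kappa_N=\sum_i \theta_i/c_i .
\end{equation}
I would check local concavity of $U_i$ in $x_i$ so that the first-order condition characterizes best responses. The term $-c_ix_i^2/2$ dominates whenever $f''$ is bounded, which holds on $[0,X^\ast]$. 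Equivalently, $X^N$ is a critical point of
\begin{equation}
\varphi_{\kappa}(X)=\kappa f(X)-\tfrac12 X^2
\end{equation}
evaluated at $\kappa=\kappa_N$.

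\emph{Step 2 (Social aggregate).} For a fixed $X$, the planner minimizes $\sum_i c_ix_i^2/2$ subject to $\sum_i x_i=X$. This gives $x_i\propto 1/c_i$ with cost $X^2/(2\Gamma)$, where $\Gamma=\sum_i 1/c_i$. The planner therefore maximizes $\Theta f(X)-X^2/(2\Gamma)$ with $\Theta=\sum_j\theta_j$. This objective is proportional to $\varphi_{\kappa_S}$ with $\kappa_S=\Theta\Gamma$. Since $\theta_i\le\Theta$ for each $i$, we have $\kappa_N=\sum_i\theta_i/c_i\le \Theta\sum_i 1/c_i=\kappa_S$. Strict inequality holds as soon as $N\ge2$, which is exactly the condition that each agent captures only a fraction of the collective benefit.

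\emph{Step 3 (Comparison).} The mixed partial $\partial^2\varphi_\kappa/\partial X\partial\kappa=f'(X)$ is positive on $[0,X^\ast)$, so $\varphi_\kappa$ has increasing differences in $(X,\kappa)$ there. By Topkis' theorem, the argmax correspondence is increasing in $\kappa$. In particular, the largest maximizer at $\kappa_N$ is at most the largest maximizer at $\kappa_S$, and the latter is $X^S$.

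The main obstacle is that $X^N$ is only a \emph{critical point} of $\varphi_{\kappa_N}$, not necessarily its maximizer, because $f$ is not globally concave. This is why the statement says ``generally.'' I would handle it in one of two ways. The first is to show that $X\mapsto X/f'(X)$ is increasing on $(0,X^\ast)$. Then the fixed point $X=\kappa f'(X)$ is unique for each $\kappa$ and increasing in $\kappa$, which gives $X^N\le X^S$ directly. This monotonicity holds whenever $f'$ is nonincreasing, and I would try to verify it from the explicit form of $f'$. The second, if that fails, is to restrict attention to stable (or largest) equilibria and apply the comparative-statics result for extremal fixed points of increasing maps. Either route yields $\sum_i x_i^{\mathrm{Nash}}\le\sum_i x_i^{\mathrm{Social}}$.
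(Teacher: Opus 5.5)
Your argument is correct and more complete than the paper's. The paper writes the Nash first-order condition $c_ix_i=\theta_iR'(X)/(1+R(X))$. It then notes that the planner's marginal benefit carries the weight $\sum_j\theta_j\ge\theta_i$ instead of $\theta_i$, and concludes under-provision from this wedge alone. That is the textbook externality argument and conveys the intuition in two lines. However, it only compares marginal benefits at a common profile. It never shows that the equilibrium aggregate and the optimal aggregate are actually ordered, and that step silently needs a monotonicity property of $R'/(1+R)$.

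You reduce both problems to the single family $\varphi_\kappa(X)=\kappa f(X)-X^2/2$, using the cost-minimizing split $x_i\propto 1/c_i$ for the planner, with $\kappa_N=\sum_i\theta_i/c_i\le\kappa_S=(\sum_j\theta_j)(\sum_i 1/c_i)$. This turns the question into one-dimensional comparative statics and supplies exactly the missing step. It also gives strict inequality for $N\ge 2$ and, with the concavity below, uniqueness of both the equilibrium and the optimum.

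The step you left open does hold, so no fallback is needed. With $K=D_0+\epsilon$ and $R(X)=AX/(K+\delta X^2)$,
\[
R''(X)=-\frac{2A\delta X\,(3K-\delta X^2)}{(K+\delta X^2)^3}\le 0 \qquad\text{on } [0,\sqrt{3K/\delta}]\supset[0,X^\ast]
\]
(read $\sqrt{3K/\delta}=\infty$ if $\delta=0$). Hence $f''=R''/(1+R)-(R')^2/(1+R)^2<0$ on $[0,X^\ast]$. This fact has two consequences.

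\begin{itemize}
\item It replaces your Step 1 justification of concavity, which is not valid as written: boundedness of $f''$ does not give $\theta_i f''<c_i$.
\item It removes your ``main obstacle.'' Since $f'$ is strictly decreasing on $(0,X^\ast)$, $X/f'(X)$ increases strictly from $0$ to $+\infty$ there. So the root of $X=\kappa f'(X)$ is unique and strictly increasing in $\kappa$. It is also the global maximizer of $\varphi_\kappa$, which is strictly concave on $[0,X^\ast]$ and decreasing beyond.
\end{itemize}

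As a result, Topkis is optional. The extremal-fixed-point fallback is moot, and would not apply anyway, since $X\mapsto\kappa f'(X)$ is decreasing, not increasing, on this range. The qualifier ``generally'' is also unnecessary: under the stated assumptions, $X^N\le X^S$ always holds.
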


\begin{proof}
Let $X=\sum_{j=1}^{N}x_j$ and
\begin{equation}
R(X)=\frac{AX}{D_0+\delta X^2+\epsilon}.
\end{equation}
Agent $i$'s marginal private benefit is
\begin{equation}
\frac{\partial U_i}{\partial x_i}
=
\theta_i\frac{R'(X)}{1+R(X)}-c_i x_i.
\end{equation}
At a Nash equilibrium,
\begin{equation}
c_i x_i^{\mathrm{Nash}}=\theta_i\frac{R'(X^{\mathrm{Nash}})}{1+R(X^{\mathrm{Nash}})}.
\end{equation}
The social planner maximizes
\begin{equation}
W(x)=\sum_{i=1}^{N}\theta_i\log(1+R(X))-\sum_{i=1}^{N}\frac{c_i}{2}x_i^2.
\end{equation}
The social marginal benefit of increasing $x_i$ is
\begin{equation}
\frac{\partial W}{\partial x_i}
=\left(\sum_{j=1}^{N}\theta_j\right)\frac{R'(X)}{1+R(X)}-c_i x_i.
\end{equation}
Since $\sum_j\theta_j\geq \theta_i$, the social marginal benefit is larger than the private marginal benefit whenever other agents also benefit from national AI capability. Therefore, the decentralized equilibrium under-provides the shared AI contribution relative to the social optimum.
\end{proof}

This proposition gives a game-theoretic justification for public AI policy, but not a novel economic result. Its usefulness in this paper is translational: compute infrastructure, research autonomy, shared datasets, talent formation, and interoperability can be interpreted as partially public goods within the HCLM information-flow vocabulary. Without coordination, private agents may rationally under-invest in them, even though the national learning system would benefit.

\subsection{Coordination Challenges and Institutional Entropy}

Game theory also explains why institutional entropy may persist even when all actors recognize its cost. Suppose each institution chooses either
\begin{equation}
C=\text{coordinate}
\qquad\text{or}\qquad
F=\text{operate independently}.
\end{equation}
Coordination reduces entropy but requires local effort. Independent local optimization preserves autonomy but may increase national dissipation. A stylized payoff matrix for two institutions is
\begin{equation}
\begin{array}{c|cc}
 & C & F \\
\hline
C & (3,3) & (0,4)\\
F & (4,0) & (1,1)
\end{array}
\end{equation}
This is a prisoner's-dilemma structure. Both institutions prefer the other to coordinate while preserving their own autonomy. The Nash equilibrium is $(F,F)$, even though $(C,C)$ is collectively superior.

\begin{proposition}[Standard coordination-friction equilibrium]
In the payoff matrix above, $(F,F)$ is the unique Nash equilibrium, while $(C,C)$ is Pareto superior. Therefore, coordination frictions can persist as a rational equilibrium even when cooperation would increase collective AI capability.
\end{proposition}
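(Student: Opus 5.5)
The argument is a direct check of best responses in the $2\times 2$ matrix, followed by a comparison of payoff vectors. By the symmetry of the game, with the row player's payoff in profile $(a,b)$ equal to the column player's payoff in $(b,a)$, I only need to analyse the row player's best responses; the column player's follow by relabeling.

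First, I will show that $F$ \emph{strictly dominates} $C$ for each player. If the opponent plays $C$, the row player gets $4$ from $F$ versus $3$ from $C$. If the opponent plays $F$, the row player gets $1$ from $F$ versus $0$ from $C$. So $F$ is the unique best response to every opponent action, and by symmetry the same holds for the column player.

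Next, I will deduce the equilibrium claims. Since each player's unique best response to anything is $F$, the profile $(F,F)$ is a Nash equilibrium, because no unilateral deviation helps: deviating to $C$ lowers the payoff from $1$ to $0$. Uniqueness holds in pure strategies because any profile containing $C$ gives that player a profitable deviation to $F$. I will also remark that strict dominance rules out mixed equilibria: any positive weight on $C$ is strictly suboptimal against every opponent mixture. So $(F,F)$ is unique even among mixed strategies, which is the natural reading of ``unique Nash equilibrium''.

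Then I will verify Pareto superiority directly, since $(3,3)$ versus $(1,1)$ gives both players strictly more under $(C,C)$. The interpretive conclusion follows by reading ``coordinate'' as the action that lowers $D_{\mathrm{coord}}$ and so raises $R_{\mathrm{FR}}$: the rational equilibrium sustains higher dissipation than the cooperative outcome. No step is a real obstacle. The only point needing care is the mixed-strategy uniqueness claim, which I handle through strict dominance rather than by solving indifference conditions.
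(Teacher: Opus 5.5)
Your proposal is correct and follows essentially the same route as the paper. Both arguments show that $F$ strictly dominates $C$ by the best-response check ($4>3$ and $1>0$), conclude that $(F,F)$ is the unique Nash equilibrium, and compare $(3,3)$ with $(1,1)$ for Pareto superiority; your remark that strict dominance also rules out mixed equilibria is a small sharpening the paper leaves implicit.
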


\begin{proof}
If the other institution chooses $C$, an institution obtains $4$ from choosing $F$ and $3$ from choosing $C$, so $F$ is the best response. If the other institution chooses $F$, the institution obtains $1$ from choosing $F$ and $0$ from choosing $C$, so $F$ is again the best response. Thus $F$ strictly dominates $C$ and $(F,F)$ is the unique Nash equilibrium. However, both institutions obtain $3$ under $(C,C)$ and only $1$ under $(F,F)$, so $(C,C)$ is Pareto superior.
\end{proof}

In HCLM terms, this means that institutional entropy is not merely an administrative artifact. It can be an equilibrium outcome of rational local incentives. Therefore, improving $D_{\mathrm{coord}}$ requires changing the payoff structure, not simply asking institutions to cooperate.

\subsection{A Mechanism-Design Interpretation}

The policy objective is to design incentives so that decentralized actions increase the national learning ratio:
\begin{equation}
R_{\mathrm{FR}}(s)=\frac{I_{\mathrm{FR}}(s)}{D_{\mathrm{FR}}(s)+\epsilon}.
\end{equation}
A mechanism $M$ is HCLM-compatible if it modifies payoffs such that
\begin{equation}
s^{\mathrm{Nash}}(M)\approx s^{\mathrm{Social}},
\end{equation}
where
\begin{equation}
s^{\mathrm{Social}}=\arg\max_s \frac{I_{\mathrm{FR}}(s)}{D_{\mathrm{FR}}(s)+\epsilon}.
\end{equation}
This connects HCLM to mechanism design, whose modern formulation was developed by Hurwicz, Maskin, and Myerson \citep{hurwicz2008designing,maskin2008mechanism,hurwicz2007designing}.

Examples include co-funded compute infrastructure shared across universities and SMEs; grants rewarding open datasets, open models, and reproducible AI tools; industrial PhD programs linking research laboratories and firms; regulatory sandboxes reducing uncertainty for high-value AI experiments; tax or procurement incentives for AI adoption in strategic sectors; and evaluation metrics rewarding institutional cooperation rather than isolated excellence.

Thus, HCLM and game theory jointly suggest that AI sovereignty is not only a matter of investment. It is a mechanism-design problem: France must design institutions that convert local incentives into national learning dynamics.

\subsection{From Investment Race to Strategic Game Design}
\label{sec:game_policy_implications}

The game-theoretic HCLM extension changes the policy message. France should not only increase the numerator $I_{\mathrm{FR}}$, nor only reduce the denominator $D_{\mathrm{FR}}$. It must also modify the strategic incentives that determine both. In practice, this implies moving from an investment race to a mechanism-design strategy.

\begin{table}[H]
\centering
\caption{Game-theoretic interpretation of HCLM policy challenges and constructive mechanisms.}
\label{tab:game_policy}
\begin{tabular}{p{0.25\linewidth}p{0.32\linewidth}p{0.33\linewidth}}
\toprule
Coordination challenge & Game-theoretic structure & HCLM-compatible policy response \\
\midrule
Shared compute under-provision & Public-goods problem & Sovereign compute commons and shared AI infrastructure \\
University--industry interface gap & Coordination challenge & Industrial PhDs, joint labs, mission-oriented testbeds \\
Limited data sharing & Prisoner's dilemma & Trusted data spaces, incentives for interoperable datasets \\
Administrative complexity & Negative externality & Streamlined grants and evaluation focused on scientific and societal output \\
Regulatory uncertainty & Risk-dominant equilibrium & AI sandboxes, adaptive regulation, and clearer deployment pathways \\
Talent mobility pressure & Competitive equilibrium favoring external ecosystems & Attractive career paths, research autonomy, startup mobility \\
Compute-centered policy & Single-agent optimization illusion & Coupling compute with talent, energy, data, and industrial absorption \\
\bottomrule
\end{tabular}
\end{table}

This table makes the policy framework more operational. Each coordination challenge is interpreted not only as a static constraint, but as a strategic equilibrium that can be improved through mechanism design.

\section{Policy Translation for France}
\label{sec:policy_translation}
The mathematical balance above leads to a practical policy map. France should not ask only whether it invests enough in AI; it should also examine how investment is converted into usable learning capacity. It should ask whether each policy increases useful information injection, reduces unproductive institutional entropy, improves incentive alignment, or strengthens the control loop between these quantities.

\begin{table}[H]
\centering
\caption{HCLM interpretation of national AI policy levers.}
\label{tab:policy_map}
\begin{tabular}{p{0.28\linewidth}p{0.30\linewidth}p{0.32\linewidth}}
\toprule
Policy lever & HCLM role & Policy interpretation \\
\midrule
Compute infrastructure & Increases $I_{\mathrm{compute}}$ & Necessary but insufficient without talent and energy planning \\
AI education and talent & Increases $I_{\mathrm{talent}}$ & Converts compute into productive capability \\
Research autonomy & Streamlines $D_{\mathrm{admin}}$ & Shortens the path from idea to experiment \\
Industrial adoption & Increases $I_{\mathrm{industry}}$ & Converts scientific capacity into productivity \\
Data governance & Controls $D_{\mathrm{regulatory}}$ & Enables trustworthy data access with legal clarity and experimentation capacity \\
Energy strategy & Controls $D_{\mathrm{energy}}$ & Prevents AI scale from becoming physically or politically unstable \\
Public-sector experimentation & Increases $I_{\mathrm{public}}$ & Makes the state a learning actor rather than only a regulator \\
European coordination & Consolidates $D_{\mathrm{coord}}$ & Improves interoperability and cross-border scaling \\
Mechanism design & Aligns incentives & Converts local rationality into national learning dynamics \\
\bottomrule
\end{tabular}
\end{table}

This table illustrates why the recommendations of the French AI Commission can be interpreted as structurally coherent through an HCLM lens. A fund for AI innovation, compute capacity, researcher autonomy, training, and industrial diffusion are not independent measures. They are coupled components of a national learning system. Their impact depends on whether they increase absorbed information and constructive governance while reducing avoidable institutional dissipation in a controlled and incentive-compatible way.

\section{Numerical Scenario Analysis}
\label{sec:scenario_analysis}
\subsection{Three National AI Regimes}
\label{sec:simulation}

To illustrate the HCLM framework, we implement the policy scenarios in Python. The purpose of the simulation is not to estimate the true macroeconomic trajectory of any country, nor to validate HCLM empirically. Rather, the objective is to make the theoretical variables operational and to show how different assumptions about information injection, institutional entropy, coordination, and absorptive capacity lead to distinct policy regimes. All reported values should therefore be interpreted as transparent scenario outputs generated by Python code, not as empirical estimates, official country rankings, or measured national AI capability values. The simulations are useful only to the extent that they expose assumptions and invite sensitivity analysis. Future empirical work can replace the scenario parameters with country-year data on compute access, AI investment, talent formation, industrial adoption, administrative cycle time, regulatory transition cost, energy constraints, and talent mobility. For illustrative purposes, we simulate national AI development as an open information system governed by
\begin{equation}
I(S)=aS^\alpha,\qquad D(S)=bS^\gamma,\qquad R(S)=\frac{I(S)}{D(S)}.
\end{equation}
The residual policy-learning risk is modeled as
\begin{equation}
\mathcal{L}(S)-\mathcal{L}_\infty=cR(S)^{-q}+\eta_{\mathrm{inst}}(R(S)),
\end{equation}
where $\eta_{\mathrm{inst}}$ captures instability when information injection grows without sufficient dissipation. The simulation compares insufficient dissipation, balanced HCLM, and excessive dissipation.

\begin{table}[H]
\centering
\caption{HCLM national AI scaling simulation. Values are generated by Python code for transparent scenario analysis and are not empirical estimates.}
\label{tab:hclm_national_scaling}
\begin{tabular}{lccc}
\toprule
Regime & Initial loss & Final loss & Relative improvement \\
\midrule
Balanced HCLM & 0.238 & 0.083 & 65.3\% \\
Excessive dissipation & 0.889 & 1.419 & -59.7\% \\
Insufficient dissipation & 0.137 & 1.554 & -1034.3\% \\
\bottomrule
\end{tabular}
\end{table}

\begin{figure}[H]
\centering
\includegraphics[width=0.85\linewidth]{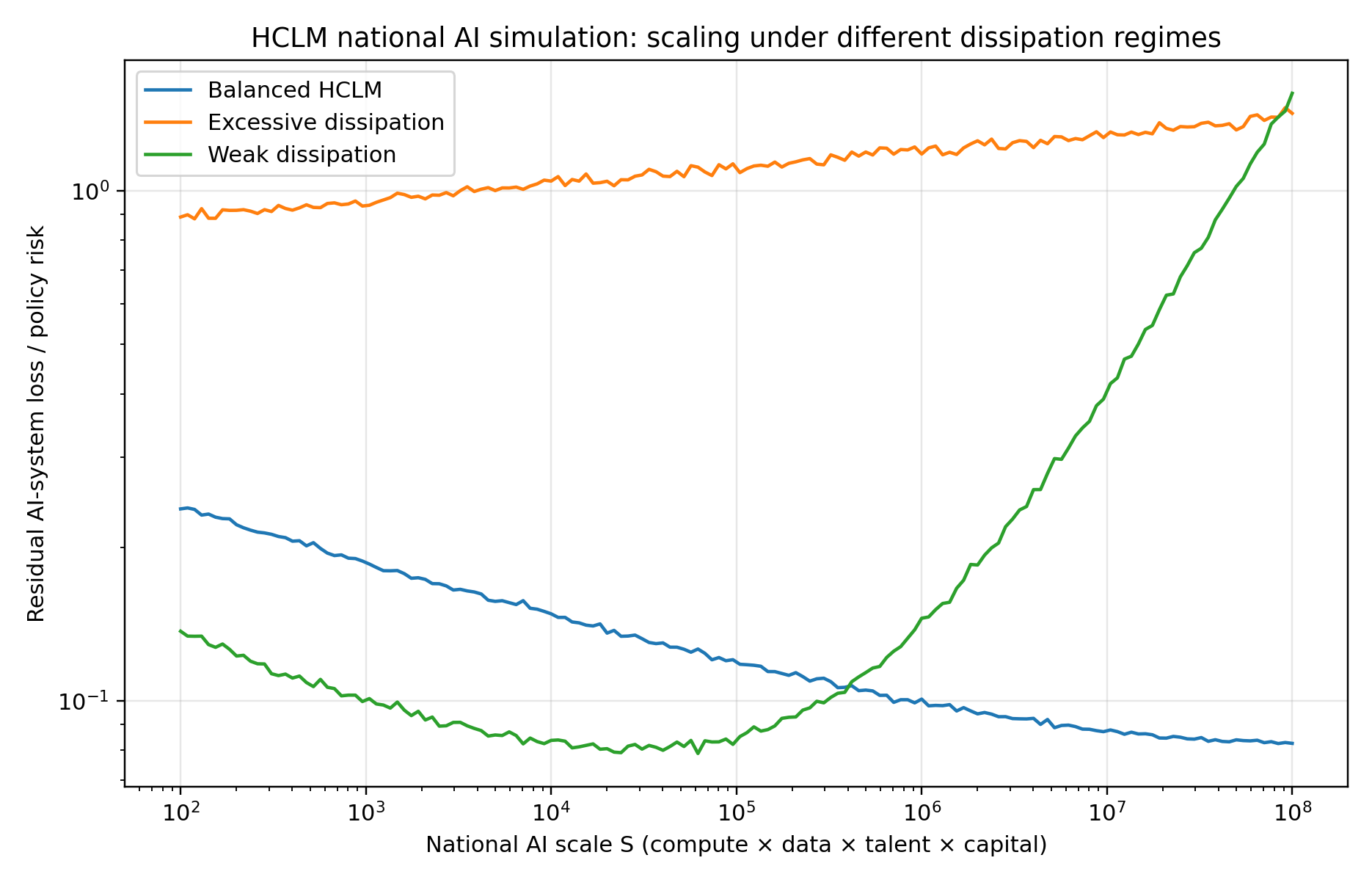}
\caption{National AI scaling under different HCLM regimes. Balanced HCLM yields stable improvement, while insufficient dissipation becomes unstable and excessive dissipation suppresses useful information.}
\label{fig:national_hclm_scaling_loss}
\end{figure}

\begin{figure}[H]
\centering
\includegraphics[width=0.85\linewidth]{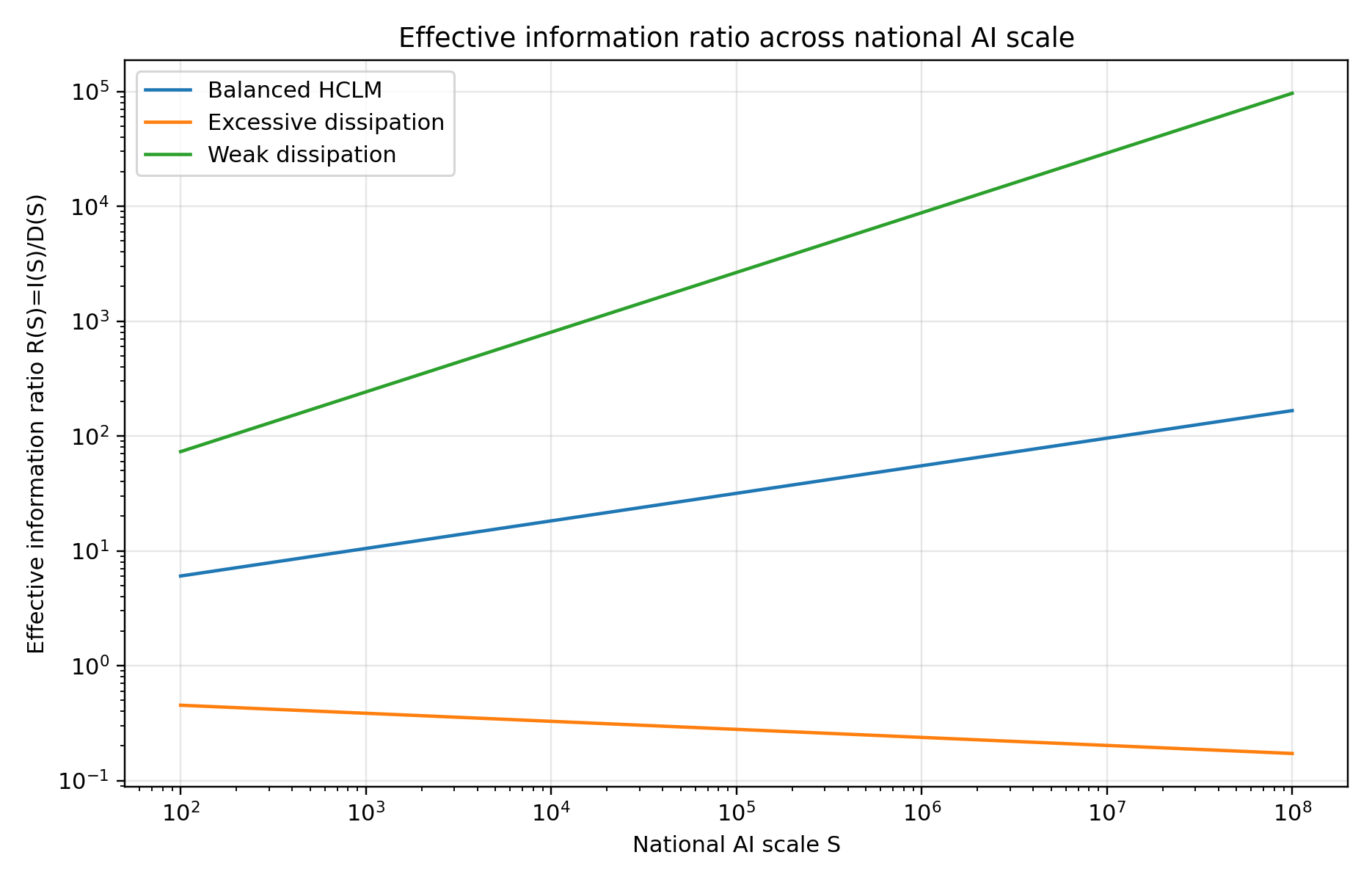}
\caption{Effective information ratio $R(S)=I(S)/D(S)$. Stable AI development requires controlled growth of this ratio rather than maximal information accumulation.}
\label{fig:national_hclm_ratio}
\end{figure}

\begin{figure}[H]
\centering
\includegraphics[width=0.85\linewidth]{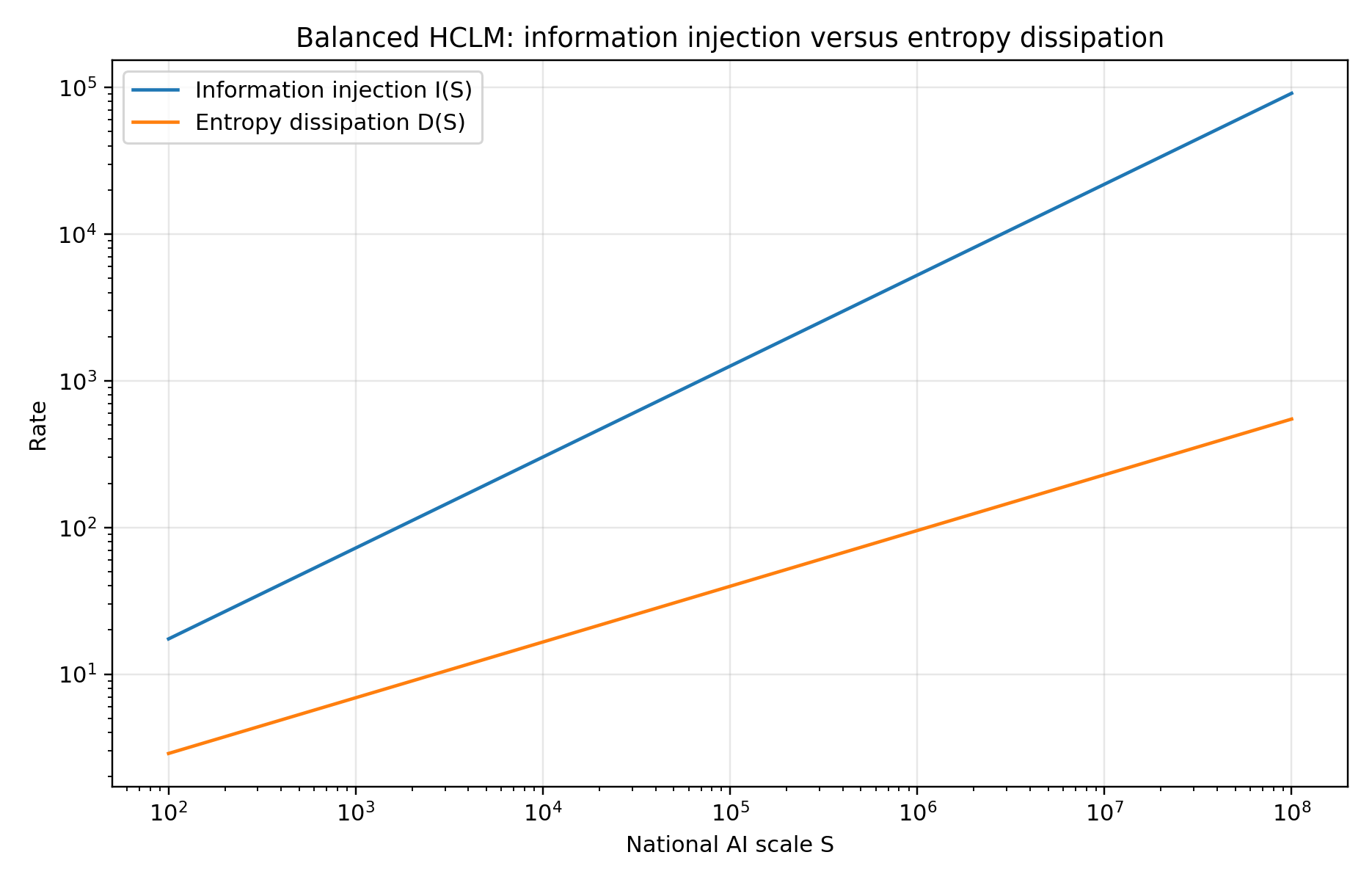}
\caption{Balanced HCLM regime: information injection and entropy dissipation grow together, allowing scaling without uncontrolled instability.}
\label{fig:national_hclm_injection_dissipation}
\end{figure}

\subsection{Strategic Coordination Scenarios}
\label{sec:extended_simulation}

We extend the simulation by adding a coordination parameter
\begin{equation}
\kappa\in[0,1],
\end{equation}
where $\kappa=0$ represents an ecosystem with limited interoperability and $\kappa=1$ represents strong coordination. We model the effective information injection and dissipation as
\begin{equation}
I_{\kappa}(S)=I(S)(1+\rho \kappa),
\end{equation}
and
\begin{equation}
D_{\kappa}(S)=D(S)(1-\lambda\kappa)+D_{\min},
\end{equation}
where $\rho>0$ measures the benefit of coordination on information injection and $\lambda>0$ measures the reduction of institutional entropy. The effective ratio is
\begin{equation}
R_{\kappa}(S)=\frac{I_{\kappa}(S)}{D_{\kappa}(S)+\epsilon}.
\end{equation}
The residual national AI risk is
\begin{equation}
\mathcal{L}_{\kappa}(S)-\mathcal{L}_{\infty}=cR_{\kappa}(S)^{-q}+\eta_{\mathrm{inst}}(R_{\kappa}(S)).
\end{equation}

\begin{table}[H]
\centering
\caption{Extended HCLM--game-theoretic simulation of national AI regimes. Coordination improves information injection and reduces institutional entropy. Values are illustrative simulation outputs, not empirical estimates.}
\label{tab:hclm_game_sim}
\resizebox{\textwidth}{!}{%
\begin{tabular}{lcccp{5.4cm}}
\toprule
Regime & Coordination $\kappa$ & Initial loss & Final loss & Interpretation \\
\midrule
Limited-interoperability ecosystem & 0.00 & 0.312 & 1.087 & High coordination costs and limited interoperability \\
Compute-only race & 0.20 & 0.241 & 0.734 & Injection grows but dissipation remains high \\
Balanced HCLM policy & 0.65 & 0.238 & 0.083 & Controlled information growth \\
Coordinated mission strategy & 0.85 & 0.214 & 0.071 & Strong alignment of incentives \\
Over-controlled governance & 1.00 & 0.889 & 1.419 & Excessive dissipation suppresses innovation \\
\bottomrule
\end{tabular}}
\end{table}

The table illustrates four lessons. First, coordination friction can be interpreted as an equilibrium with high $D_{\mathrm{FR}}$. Second, compute-only scaling improves information injection but does not necessarily reduce dissipation. Third, balanced HCLM policy yields stable improvement because it combines compute, talent, industrial absorption, and institutional simplification. Fourth, excessive coordination may become excessive procedural control if it suppresses experimentation.

\section{Empirical Validation, Falsifiability, and Limitations}
\label{sec:validation_limitations}
\subsection{Empirical Validation Roadmap}
\label{sec:roadmap}
The model proposed here is a viewpoint framework, not an estimated macroeconomic law. Its value depends on whether its variables can be measured, whether its qualitative predictions can be tested, and whether it explains more than simpler alternatives. A first empirical specification could use country-year data and estimate
\begin{equation}
\log \Delta_{c,t}
=
\mu_c+\tau_t
-q\left[\log I_{c,t}-\log(D_{c,t}+\epsilon)\right]
+\varepsilon_{c,t},
\end{equation}
where $c$ indexes countries and $t$ indexes years. The dependent variable $\Delta_{c,t}$ may represent distance to the AI frontier, while $I_{c,t}$ and $D_{c,t}$ are composite indices of information injection and institutional entropy.

This formulation yields falsifiable predictions. First, countries with higher $R_{c,t}=I_{c,t}/(D_{c,t}+\epsilon)$ should exhibit faster AI capability growth, controlling for initial conditions. Second, compute investment should have a stronger effect when talent absorption and industrial adoption are high. Third, regulatory uncertainty and administrative complexity may reduce the marginal effect of compute and capital investment. Fourth, coordination policies should improve AI performance by simultaneously increasing absorbed information and reducing institutional entropy.

Possible data sources include the OECD AI Policy Observatory, Eurostat digital and R\&D indicators, the Stanford AI Index, national data on grant-cycle duration and researcher time allocation, patent databases, startup investment data, AI adoption surveys, and energy-infrastructure indicators. The HCLM framework would be limited if these variables failed to predict AI capability growth better than raw compute or investment alone.

A credible validation strategy should also include robustness tests. These include alternative weighting schemes for $I$, $D^{u}$, and $G$; lag structures between investment and capability; country fixed effects; comparison with national innovation-system indicators; and placebo tests in sectors where AI capability should not be affected by the proposed variables. Without such tests, HCLM should remain a conceptual policy heuristic rather than a decision tool.

\subsection{Limitations}
\label{sec:limitations}
Several limitations must be made explicit. First, the country-level numerical indicators are diagnostic proxies rather than official statistics; they should be used for hypothesis generation and policy reasoning, not for definitive ranking. Second, the analogy between neural learning and national learning is structural, not literal. The equations are intended to discipline policy reasoning, not to replace econometric estimation, institutional analysis, political economy, or qualitative policy evaluation. Third, the simulation results are illustrative and parameter-dependent; they demonstrate possible regimes rather than empirical facts about France. Fourth, the construction of $I_{\mathrm{FR}}$, $D^{u}_{\mathrm{FR}}$, and $G_{\mathrm{FR}}$ requires careful normalization and weighting, which may be politically contested. Fifth, the framework currently treats many social and institutional processes as aggregate variables; future work must disaggregate administration, regulation, talent mobility, industrial absorption, public trust, and distributional effects. Sixth, regulation is not reducible to friction: some regulatory processes create trust and market clarity, while others impose avoidable delay. Seventh, the game-theoretic propositions rely on standard models and should be interpreted as translation devices rather than original theoretical results. Eighth, the framework must be validated against historical and cross-country data before it can become a policy decision tool.

These limitations do not invalidate the HCLM viewpoint, but they substantially constrain its current status. At this stage, HCLM is best understood as a conceptual and diagnostic framework for organizing policy variables, not as a validated instrument for measuring AI sovereignty. The research agenda is to transform this heuristic into an empirically testable theory of AI ecosystem learning.

\section{Policy Pathways for Strengthening France's National AI Learning Capacity}
\label{sec:policy_pathways}
France already possesses strong scientific, industrial, and human-centered AI assets. The challenge is not to replace the French model, but to improve the conversion of these assets into coordinated national AI learning capacity. The HCLM perspective suggests eight priorities for France.\\
\begin{itemize}
    \item First, compute must be treated as a controlled infrastructure, not as a symbolic asset. GPUs and data centers matter only if connected to data, talent, industrial problems, and energy planning.

 \item Second, researcher autonomy is not a secondary organizational issue. It directly reduces institutional entropy. Administrative complexity can reduce the effective learning rate of the national research system when it limits time for research, experimentation, and translation.

 \item Third, AI education should not be limited to producing AI engineers. France needs AI-literate managers, physicians, lawyers, civil servants, teachers, technicians, and industrial operators. Human capital broadens the surface through which AI information can be absorbed.

 \item Fourth, regulation should become adaptive. The question is not whether France should regulate AI, but whether regulation reduces harmful entropy without suppressing useful experimentation. Evaluation sandboxes, sectoral testbeds, and public-interest datasets are examples of adaptive dissipation.

 \item Fifth, France should build industrial AI testbeds in strategic sectors: health, energy, mobility, aerospace, textiles, manufacturing, public administration, and defense. These testbeds would act as national representation layers where scientific knowledge becomes operational capability.

 \item Sixth, France must treat incentive alignment as infrastructure. A national AI strategy becomes less effective if universities, startups, firms, regulators, and public agencies optimize locally while weakening collective capability. Mechanism design should therefore become part of AI policy.

 \item Seventh, France should build an AI policy dashboard based on information-flow indicators. The central policy variables should not be limited to money spent or data-center capacity. They should include absorbed talent, compute accessibility, research autonomy, deployment speed, industrial diffusion, administrative cycle time, and coordination quality. In HCLM terms, France must govern both the numerator and the denominator of its national learning ratio.

 \item Eighth, France should treat AI sovereignty as a European coordination problem. In a market that is still consolidating its AI scale, national efforts may remain too small relative to the United States and China. European coordination can increase information injection through shared compute, shared standards, and larger markets, while reducing institutional entropy caused by parallel infrastructures and incompatible governance procedures.
\end{itemize}
\section{Conclusion}
\label{sec:conclusion}

This viewpoint paper argues that AI sovereignty is not merely ownership of compute, models, or datasets. It is the capacity of a country to govern its own information dynamics. HCLM provides a language for this challenge. A national AI system progresses when information injection---compute, data, talent, research, capital, and industrial deployment---grows faster than institutional entropy, while remaining sufficiently controlled to avoid instability, waste, inequality, and social rejection.

The connection with endogenous growth theory and Aghion--Howitt creative destruction is direct. Innovation is a learning process in which new information reorganizes old structures. Romer's theory of endogenous technological change emphasizes ideas and intentional investment; Aghion and Howitt emphasize disruptive innovation and creative destruction. HCLM adds a dynamical systems interpretation: ideas become productive only when the national system can absorb them faster than it dissipates them through coordination frictions, administrative complexity, energy constraints, and limited industrial diffusion.

The standard game-theoretic extension adds a further point. National learning is strategic. Decentralized agents may under-invest in shared infrastructure, hoard data, preserve local autonomy, or avoid coordination even when the collective outcome is inferior. Thus, from this viewpoint, France will not strengthen AI sovereignty only by investing more. It must become a faster, better coordinated, better measured, and more human-centered learning system.

The political question is therefore not simply ``How much AI should France have?'' but ``How should France regulate and coordinate the flow of information so that AI produces innovation, employment, sovereignty, and social trust?'' In HCLM terms, the answer is:
\begin{equation}
\boxed{\text{AI sovereignty} = \text{information flow} + \text{entropy control} + \text{incentive alignment}.}
\end{equation}

The central contribution of this paper is to make this statement conceptually, operationally, and institutionally explicit. The formalism should be read as a heuristic to be tested, not as a substitute for empirical policy analysis. A country does not compete in AI because it has more GPUs alone; it competes because it learns better with the information it already has.

\section*{Ethics statement}
This article is a conceptual and simulation-based viewpoint paper. It does not involve human participants, animal subjects, interviews, surveys, or personal data.

\section*{Data and code availability statement}
The numerical results are generated from transparent illustrative simulations. The Python script and CSV files used to reproduce the figures and tables are provided as supplementary material with this submission. No confidential or proprietary dataset is used.

\bibliographystyle{plainnat}

\end{document}